\DeclareRobustCommand{\rchi}{{\mathpalette\irchi\relax}}
\newcommand{\irchi}[2]{\raisebox{\depth}{$#1\chi$}} 
\DeclarePairedDelimiterX{\infdivx}[2]{(}{)}{%
  #1\;\delimsize\|\;#2%
}
\DeclarePairedDelimiterX{\inftvx}[2]{(}{)}{%
  #1, #2%
}
\newcommand{\kl}{\Delta_\textrm{KL}\infdivx}
\newcommand{\tv}{\Delta_\textrm{TV}\inftvx}
\newcommand{\tvsq}{\Delta^2_\textrm{TV}\inftvx}
\newcommand{\chidiv}{\Delta_{\chi^2}\infdivx}
\DeclareMathOperator*{\argmin}{arg\,min}
\newcommand{\norm}[1]{\left\lVert#1\right\rVert}
\DeclareMathOperator{\Tr}{Tr}
\newtheorem{prop}{Proposition}
\newtheorem{lemma}{Lemma}
\title{Decision-Making with \\ Auto-Encoding Variational Bayes}
\author{%
Romain Lopez$^{1}$, Pierre Boyeau$^1$, Nir Yosef$^{1,2,3}$, Michael I. Jordan$^{1,4}$, and Jeffrey Regier$^5$\\ 
~\\
$^1$ Department of Electrical Engineering and Computer Sciences,\\ University of California, Berkeley\\
$^2$ Chan-Zuckerberg Biohub, San Francisco \\
$^3$ Ragon Institute of MGH, MIT and Harvard \\
$^4$ Department of Statistics, University of California, Berkeley\\
$^5$ Department of Statistics, University of Michigan
}
\newcommand\blfootnote[1]{%
  \begingroup
  \renewcommand\thefootnote{}\footnote{#1}%
  \addtocounter{footnote}{-1}%
  \endgroup
}
\begin{document}

\maketitle

\begin{abstract}
To make decisions based on a model fit with auto-encoding variational Bayes (AEVB), practitioners often let the variational distribution serve as a surrogate for the posterior distribution. This approach yields biased estimates of the expected risk, and therefore leads to poor decisions for two reasons. First, the model fit with AEVB may not equal the underlying data distribution. Second, the variational distribution may not equal the posterior distribution under the fitted model.
We explore how fitting the variational distribution based on several objective functions other than the ELBO, while continuing to fit the generative model based on the ELBO, affects the quality of downstream decisions.
For the probabilistic principal component analysis model, we investigate how importance sampling error, as well as the bias of the model parameter estimates, varies across several approximate posteriors when used as proposal distributions.
Our theoretical results suggest that a posterior approximation distinct from the variational distribution should be used for making decisions. Motivated by these theoretical results, we propose learning several approximate proposals for the best model and combining them using multiple importance sampling for decision-making. In addition to toy examples, we present a full-fledged case study of single-cell RNA sequencing. In this challenging instance of multiple hypothesis testing, our proposed approach surpasses the current state of the art.
\end{abstract}

\section{Introduction}

The auto-encoding variational Bayes (AEVB) algorithm performs model selection by maximizing a lower bound on the model evidence~\cite{AEVB, Rezende2014}. In the specific case of variational autoencoders (VAEs), a low-dimensional representation of data is transformed through a learned nonlinear function (another neural network) into the parameters of a conditional likelihood. VAEs achieve impressive performance on pattern-matching tasks like representation/manifold learning and synthetic image generation~\cite{Gulrajani2017}.

Many machine learning applications, however, require decisions, not just compact representations of the data. 
Researchers have accordingly attempted to use VAEs for decision-making applications, including novelty detection in control applications~\cite{Amini2018}, mutation-effect prediction for genomic sequences~\cite{Riesselman2018}, artifact detection~\cite{Ding2018}, and Bayesian hypothesis testing for single-cell RNA sequencing data~\cite{Lopez292037, Xu2019}.
To make decisions based on VAEs, these researchers implicitly appeal to Bayesian decision theory, which counsels taking the action that minimizes expected loss under the posterior distribution~\cite{Fienberg}.

However, for VAEs, the relevant functionals of the posterior cannot be computed exactly.
Instead, after fitting a VAE based on the ELBO, practitioners take one of three approaches to decision-making: i) the variational distribution may be used as a surrogate for the posterior~\cite{Riesselman2018}, ii) the variational distribution may be used as a proposal distribution for importance sampling~\cite{NIPS2018_7699}, or iii) the variational distribution can be ignored once the model is fit, and decisions may be based on an iterative sampling method such as MCMC or annealed importance sampling~\cite{wu2016quantitative}. But will any of these combined procedures (ELBO for model training and one of these methods for approximating posterior expectations) produce good decisions?

They may not, for two reasons.
First, estimates of the relevant expectations of the posterior may be biased and/or may have high variance. The former situation is typical when the variational distribution is substituted for the posterior; the latter is common for importance sampling estimators. By using the variational distribution as a proposal distribution, practitioners aim to get unbiased low-variance estimates of posterior expectations. But this approach often fails. The variational distribution recovered by the VAE, which minimizes the reverse Kullback-Leibler (KL) divergence between the variational distribution and the model posterior, is known to systematically underestimate variance~\cite{wainwright2008graphical,Turner2011}, making it a poor choice for an importance sampling proposal distribution. Alternative inference procedures have been proposed to address this problem. For example, expectation propagation (EP)~\cite{minka2013expectation} and CHIVI~\cite{NIPS2017_6866} minimize the forward KL divergence and the $\rchi^2$ divergence, respectively. Both objectives have favorable properties for fitting a proposal distribution~\cite{Chatterjee2018,Agapiou2017}. IWVI~\cite{NIPS2018_7699} seeks to maximize a tight lower bound of the evidence that is based on importance sampling estimates (IWELBO). Empirically, IWVI outperforms VI for estimating posterior expectations. It is unclear, however, which method to choose for a particular application.

Second, even if we can faithfully compute expectations of the model posterior, the model learned by the VAE may not resemble the real data-generating process~\cite{Turner2011}. Most VAE frameworks rely on the IWELBO, where the variational distribution is used as a proposal~\cite{BurdaGS15,rainforth2018tighter,chen2018variational}. For example, model and inference parameters are jointly learned in the IWAE~\cite{BurdaGS15} using the IWELBO. Similarly, the wake-wake (WW) procedure~\cite{Bornschein2015,le2018revisiting} uses the IWELBO for learning the model parameters but seeks to find a variational distribution that minimizes the forward KL divergence. In the remainder of this manuscript, we will use the same name to refer to either the inference procedure or the associated VAE framework (e.g., WW will be used to refer to EP). 

To address both of these issues, we propose a simple three-step procedure for making decisions with VAEs. First, we fit a model based on one of several objective functions (e.g., VAE, IWAE, WW, or $\rchi$-VAE) and select the best model based on some metric (e.g., IWELBO calculated on held-out data with a large numbers of particles). The $\rchi$-VAE is a novel variant of the WW algorithm that, for fixed $p_\theta$, minimizes the $\rchi^2$ divergence (for further details, see Appendix~\ref{app:chi-vaes}). 
Second, with the model fixed, we fit several approximate posteriors, based on the same objective functions, as well as annealed importance sampling~\cite{wu2016quantitative}. Third, we combine the approximate posteriors as proposal distributions for multiple importance sampling~\cite{veach1995optimally} to make decisions that minimize the expected loss under the posterior. 
In multiple importance sampling, we expect the mixture to be a better proposal than either of its components alone, especially in settings where the posterior is complex because each component can capture different parts of the posterior.

After introducing the necessary background (Section~\ref{sec:related}),
we provide a complete analysis of our framework for the probabilistic PCA model~\cite{Bishop:2006:PRM:1162264} (Section~\ref{sec:lin_VAE_analysis}). 
In this tractable setting, we recover the known fact that an underdispersed proposal causes severe error to importance sampling estimators~\cite{pmlr-v80-yao18a}. The analysis also shows that overdispersion may harm the process of model learning by exacerbating existing biases in variational Bayes. We also confirm these results empirically. Next, we perform an extensive empirical evaluation of two real-world decision-making problems. 
First, we consider a practical instance of classification-based decision
theory. 
In this setting, we show that the vanilla VAE becomes overconfident in its posterior predictive density, which harms performance. We also show that our three-step procedure outperforms IWAE and WW (Section~\ref{label_decision}).
We then present a scientific case study, focusing on an instance of multiple hypothesis testing in single-cell RNA sequencing data. Our approach yields a better calibrated estimate of the expected posterior false discovery rate (FDR) than that computed by the current state-of-the-art method (Section~\ref{FDR}). 

\blfootnote{Our code is available at
\url{http://github.com/PierreBoyeau/decision-making-vaes}}

\section{Background}
\label{sec:related}
Bayesian decision-making~\cite{Fienberg} makes use of a model and its posterior distribution to make optimal decisions. We bring together several lines of research in an overall Bayesian framework.

\subsection{Auto-encoding variational Bayes}
\label{aevb}
Variational autoencoders~\cite{AEVB} are based on a hierarchical Bayesian model~\cite{GelmanHill:2007}. Let $x$ be the observed random variables and $z$ the latent variables. To learn a generative model $p_\theta(x, z)$ that maximizes the evidence $\log p_\theta(x)$, variational Bayes~\cite{wainwright2008graphical} uses a proposal distribution $q_\phi(z \mid x)$ to approximate the posterior $p_\theta(z \mid x)$. The evidence decomposes as the \emph{evidence lower bound} (ELBO) and the reverse KL variational gap (VG):
\begin{align}
    \log p_\theta(x) &= \mathbb{E}_{q_\phi(z \mid x)}\log \frac{p_\theta(x, z)}{q_\phi(z \mid x)} + \kl{q_\phi}{p_\theta}. \label{ELBO}
\end{align}
Here we adopt the condensed notation $\kl{q_\phi}{p_\theta}$ to refer to the KL divergence between $q_\phi(z\mid x)$ and $p_\theta(z \mid x)$.
In light of this decomposition, a valid inference procedure involves jointly maximizing the ELBO with respect to the model's parameters and the variational distribution. The resulting variational distribution minimizes the reverse KL divergence. VAEs parameterize the variational distribution with a neural network. Stochastic gradients of the ELBO with respect to the variational parameters are computed via the reparameterization trick~\cite{AEVB}. 

\subsection{Approximation of posterior expectations}
\label{ss:posterior_approx}
Given a model $p_\theta$, an action set $\mathcal{A}$, and a loss $L$, the optimal decision $a^*(x)$ for observation $x$ is an expectation taken with respect to the posterior:
\(
    \mathcal{Q}(f, x) = \mathbb{E}_{p_\theta(z \mid x)}f(z).
\)
Here $f$ depends on the loss~\cite{Fienberg}. We therefore focus on numerical methods for estimating $\mathcal{Q}(f, x)$. Evaluating these expectations is the aim of Markov chain Monte Carlo (MCMC), annealed importance sampling (AIS)~\cite{neal2001annealed}, and variational methods~\cite{NIPS2018_7699}. 

Although we typically lack direct access to the posterior $p_\theta(z \mid x)$, we can, however, sample $(z_i)_{1 \leq i \leq n}$ from the variational distribution $q_\phi(z \mid x)$.  A naive but practical approach is to consider a plugin estimator~\cite{Amini2018, Riesselman2018, Ding2018, Lopez292037}:
\begin{align}
\hat{\mathcal{Q}}^n_{\textrm{P}}(f, x) = \frac{1}{n}\sum_{i=1}^nf(z_i).
\end{align}
This estimator replaces the exact posterior by sampling $z_1,\ldots,z_n$ from $q_\phi(z \mid x)$.  A less naive approach is to use
self-normalized importance sampling (SNIS):
\begin{align}
\hat{\mathcal{Q}}^n_{\textrm{IS}}(f, x) = 
    \frac{\sum_{i=1}^nw(x, z_i)f(z_i)}{\sum_{j=1}^nw(x, z_j)}.
\end{align}
Here the importance weights are
\(
    w(x, z) := \nicefrac{p_\theta(x, z)}{q_\phi(z\mid x)}. 
\)
The non-asymptotic behavior of both estimators and their variants is well understood~\cite{Chatterjee2018,Agapiou2017,Cortes2010}. Moreover, each upper bound on the error motivates an alternative inference procedure in which the upper bound is used as a surrogate for the error. For example, \cite{Chatterjee2018} bounds the error of the IS estimator with a function of \textit{forward} KL divergence $\kl{p_\theta}{q_\phi}$, which motivates the WW algorithm in~\cite{le2018revisiting}. Similarly,~\cite{Agapiou2017} provides an upper bound of the error based on the $\rchi^2$ divergence, which motivates our investigation of the $\rchi$-VAE. However, these upper bounds are too loose to explicitly compare, for example, the worst-case performance of $\rchi$-VAE and WW when $f$ belongs to a function class (for further details, see Appendix~\ref{app:err_simple_bounds}).

\section{Theoretical analysis for pPCA}
\label{sec:lin_VAE_analysis}
We aim to understand the theoretical advantages and disadvantages of each objective function used for training VAEs and how they impact downstream decision-making procedures.
Because intractability prevents us from deriving sharp constants in general models, to support a precise theoretical analysis, we consider probabilistic principal component analysis (pPCA)~\cite{tipping1999probabilistic}. pPCA is a linear model for which posterior inference is tractable. 
Though the analysis is a special case, we believe that it provides an intuition for the performance of our decision-making procedures more generally because, for many of the models that are used in practice, a Gaussian distribution approximates the posterior well, as demonstrated by the success of the Laplace approximation~\cite{laplace1986}.

In pPCA, the latent variables $z$ generate data $x$. We use an isotropic Gaussian prior on $z$ and a spherical Gaussian likelihood for $x$:
\begin{align}
\label{eq:lin_vae_gen}
\begin{split}
p_\theta(z) &=\textrm{Normal}(0, I) \\
p_\theta(x \mid z) &= \textrm{Normal}(Wz + \mu, \sigma^2I).
\end{split}
\end{align}
Following~\cite{Turner2011}, we parameterize $\sigma^2 = \nicefrac{1}{(1 - \lambda^2)}$, as well as $W_{ij} = e^\lambda W'_{ij}$ if $i \neq j$ and $W_{ij} = W'_{ij}$ otherwise. This parameterization is designed to make model selection challenging. Here $\theta := (W', \mu, \lambda)$. We consider a class of amortized posterior approximations with the following form:
\begin{align}
\label{eq:lin_vae_inf}
q_\phi(z \mid x) &= \textrm{Normal}\left(h_\eta(x), D(x)\right).
\end{align}
Here $h_\eta$ is a neural network with parameters $\eta$, $D(x)$ is the diagonal covariance matrix given by $\mathrm{diag}(h_\xi(x))$, and $h_\xi$ is a neural network with parameters $\xi$. In this example, the encoder parameters are $\phi = (\eta, \xi)$.

\subsection{Approximate posterior variance}
 
\begin{figure}
    \centering
    \includegraphics[width=\linewidth]{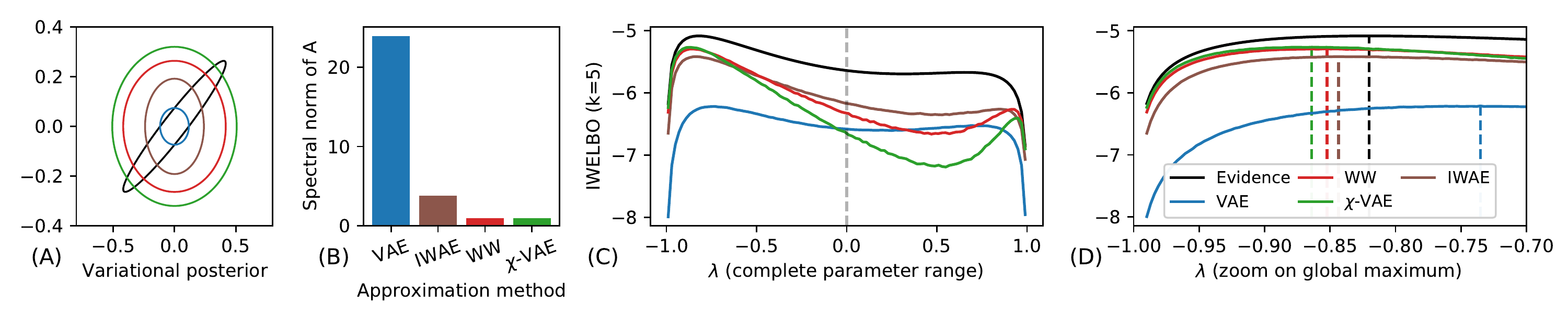}
    \caption{Variational Bayes for the bivariate pPCA example. (A) Gaussian mean-field approximations to the posterior (same legend for all the figures) (B) Corresponding values of $\norm{A}_2$ (C) IWELBO ($k=5$) as a function of $\lambda$, with proposals estimated for $\lambda=0$ and other model parameters fixed to their true values (D) Specific zoom around the global maximum.}
    \label{fig:norm_A}
    \vspace{-0.3cm}
\end{figure}

Exploiting the invariance properties of Gaussian distributions~\cite{wainwright_2019}, our next lemma gives concentration bounds for the logarithm of the importance sampling weights under the posterior.
\begin{restatable}{lemma}{lemmalogratio}\emph{(Concentration of the log-likelihood ratio)}\label{prop:log-ratio}
For an observation $x$, let $\Sigma$ be the variance of the posterior distribution under the pPCA model, $p_\theta(z \mid x)$. Let 
\begin{align}
\label{eq:A}
    A(x) = \Sigma^{\nicefrac{1}{2}}\left[D(x)\right]^{-1}\Sigma^{\nicefrac{1}{2}} - I.
\end{align}
For $z$ following the posterior distribution, $\log w(x, z)$ is a sub-exponential random variable. Further, there exists a $t^*(x)$ such that, under the posterior $p_\theta(z \mid x)$ and for all $t>t^*(x)$,
\begin{align}
    \mathbb{P}\left(\left|\log w(x, z) - \kl{p_\theta}{q_\phi}\right| \geq t\right) & \leq e^{-\frac{t}{8\norm{A(x)}_2}}.
\end{align}
\end{restatable}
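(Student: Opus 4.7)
The plan is to reduce $\log w(x,z)$ to an affine-plus-quadratic form in a standard Gaussian and then invoke standard sub-exponential concentration for quadratic forms (Hanson-Wright / Bernstein). First, since the pPCA posterior is Gaussian with mean $\mu_p(x)$ and covariance $\Sigma$, and the proposal $q_\phi(z\mid x)$ is Gaussian with mean $h_\eta(x)$ and diagonal covariance $D(x)$, writing
\begin{align*}
\log w(x, z) = \log p_\theta(x) + \log p_\theta(z \mid x) - \log q_\phi(z \mid x)
\end{align*}
gives a quadratic polynomial in $z$ whose coefficients are determined by $\Sigma$ and $D(x)$.

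Next, I would change variables to $u = \Sigma^{-\nicefrac{1}{2}}(z - \mu_p(x))$, so that $u \sim \mathrm{Normal}(0, I)$ under the posterior. A direct expansion of the two Gaussian log-densities and collecting terms produces
\begin{align*}
\log w(x, z) = c(x) + \ell(x)^\top u + \tfrac{1}{2} u^\top A(x) u,
\end{align*}
where $c(x)$ absorbs all $u$-independent constants (including $\log p_\theta(x)$ and log-determinants), $\ell(x) = \Sigma^{\nicefrac{1}{2}} D(x)^{-1}(\mu_p(x) - h_\eta(x))$, and $A(x)$ is exactly the matrix defined in \eqref{eq:A} arising from the difference of precision matrices. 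Diagonalizing $A(x) = U \Lambda U^\top$ and setting $v = U^\top u \sim \mathrm{Normal}(0,I)$, the expression further decouples across coordinates as $c(x) + \sum_{i=1}^d (\alpha_i v_i + \tfrac{1}{2} \lambda_i v_i^2)$, which shows explicitly that $\log w(x, z)$ is (up to a shift) a weighted sum of independent shifted non-central chi-squared random variables, hence sub-exponential with sub-exponential parameter controlled by $\max_i |\lambda_i| = \|A(x)\|_2$.

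To obtain the stated tail bound, I would apply a Bernstein-type inequality for such weighted chi-squared sums (or equivalently Hanson-Wright for the quadratic term together with a Gaussian tail for the linear term). Computing $\mathbb{E}_{p_\theta(z\mid x)}[\log w(x,z)] = \log p_\theta(x) + \kl{p_\theta}{q_\phi}$ and combining the sub-Gaussian deviation from $\ell(x)^\top u$ with the sub-exponential deviation from $\tfrac{1}{2} u^\top A(x) u$ yields a two-regime bound of the form $\exp(-c\min(t^2/\|A\|_F^2,\, t/\|A\|_2))$. In the large-$t$ regime the sub-exponential tail dominates and the sub-Gaussian contribution of the linear part, along with the constant offset $\log p_\theta(x)$ between $\mathbb{E}[\log w]$ and $\kl{p_\theta}{q_\phi}$, can be absorbed into the threshold $t^*(x)$, leaving the clean bound $e^{-t/(8\|A(x)\|_2)}$.

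The main obstacle is tracking the constant $8$: one must be careful about how the linear term $\ell(x)^\top u$ interacts with the quadratic term (they are not independent after rotation, since $\alpha$ and $\Lambda$ come from the same eigendecomposition), and about the precise constants in the Bernstein inequality used. The construction of $t^*(x)$ as the crossover between the sub-Gaussian and sub-exponential regimes, and the estimates needed to absorb the $\log p_\theta(x)$ offset and the linear-term deviations, are where the bookkeeping is delicate; once those are fixed, everything else is a routine computation with Gaussian quadratic forms.
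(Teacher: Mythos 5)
Your proposal follows essentially the same route as the paper: standardize $z$ under the posterior so that $\log w$ becomes $c(x)+\ell(x)^\top u+\tfrac{1}{2}u^\top A(x)u$ for standard Gaussian $u$, diagonalize, and read off a sub-exponential tail with parameter proportional to $\norm{A(x)}_2$, with $t^*(x)$ marking the crossover out of the sub-Gaussian regime. The only difference is cosmetic: where you invoke Hanson--Wright/Bernstein, the paper proves the needed bound directly (its Lemma 2) by computing the per-coordinate moment generating function $\mathbb{E}\exp\{\lambda(\Lambda_i\xi_i^2+\beta_i\xi_i)\}$ in closed form, which handles the coupling between the linear and quadratic parts within each coordinate exactly and yields the explicit parameters $(\sqrt{2\norm{A}_F^2+\norm{b}_2^2/4},\,4\norm{A}_2)$ behind the constant $8$.
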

This lemma characterizes the concentration of the log-likelihood ratio---a quantity central to all the VAE variants we analyze---as the spectral norm of a simple matrix $\norm{A(x)}_2$. 
Plugging the concentration bound from Lemma~\ref{prop:log-ratio} into the result of~\cite{Chatterjee2018}, we obtain an error bound on the IS estimator for posterior expectations.

\begin{restatable}{thm}{thmppca}\emph{(Sufficient sample size)}\label{thm:linear_VAE}
For an observation $x$, suppose that the second moment of $f(z)$ under the posterior is bounded by $\kappa$. If the number of importance sampling particles $n$ satisfies $n = \beta \exp\{\kl{p_\theta}{q_\phi}\}$ for some $\beta > \log t^*(x)$, then
\begin{align}
\label{eq:tail_sample}
    \mathbb{P}\left(\left|\hat{\mathcal{Q}}^n_{\textrm{IS}}(f, x) -  \mathcal{Q}(f, x) \right| \geq \frac{2\sqrt{3\kappa}}{\beta^{\nicefrac{1}{8\gamma}} - \sqrt{3}}\right) \leq \frac{\sqrt{3}}{\beta^{\nicefrac{1}{8\gamma}}},
\end{align}
with $\gamma = \max\left(1, 4\norm{A(x)}_2\right)$.
\end{restatable}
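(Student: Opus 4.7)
The plan is to combine Lemma~\ref{prop:log-ratio} with the self-normalized importance sampling error bound of Chatterjee and Diaconis~\cite{Chatterjee2018}. Their bound controls $|\hat{\mathcal{Q}}^n_{\textrm{IS}}(f,x) - \mathcal{Q}(f,x)|$ by two contributions: (i) a term that decays exponentially in the slack $\log n - L$, where $L = \kl{p_\theta}{q_\phi}$, and (ii) a term controlled by the posterior tail probability $\mathbb{P}(\log w(x,z) - L \geq t')$ for a free threshold $t'$. The first contribution is purely a function of the sampling budget relative to $e^L$; the second is exactly what Lemma~\ref{prop:log-ratio} characterizes through the sub-exponential parameter $8\norm{A(x)}_2$, so the two results mesh directly.

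First, I would instantiate the Chatterjee-Diaconis inequality in its $L^2$ form, so that the hypothesis $\mathbb{E}_{p_\theta(z \mid x)}[f(z)^2] \leq \kappa$ enters as a scalar factor $\sqrt{\kappa}$ (rather than requiring boundedness of $f$). Parameterizing the budget as $n = \beta e^L$ collapses the first contribution to a function of $\beta$ alone, leaving only $t'$ to tune. I would then substitute the sub-exponential tail from Lemma~\ref{prop:log-ratio}, namely $\exp(-t'/(8\norm{A(x)}_2))$; the hypothesis $\beta > \log t^*(x)$ is precisely what guarantees that the optimal choice of $t'$ lies above $t^*(x)$, validating this substitution. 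Balancing the two contributions by taking $t'$ proportional to $\log\beta$ produces a combined second-moment bound of order $\sqrt{\kappa} \cdot \beta^{-1/(8\gamma)}$, where $\gamma = \max(1, 4\norm{A(x)}_2)$ captures whichever of the two decay rates is slower.

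Finally, I would convert this moment bound to the stated probability bound via a Markov-style split. Writing $\mathbb{E}|\hat{\mathcal{Q}}^n_{\textrm{IS}}(f,x) - \mathcal{Q}(f,x)|^2 \leq 3\kappa \cdot \beta^{-1/(4\gamma)}$ and applying Chebyshev at the threshold $a = 2\sqrt{3\kappa}/(\beta^{1/(8\gamma)} - \sqrt{3})$ yields precisely the right-hand side $\sqrt{3}/\beta^{1/(8\gamma)}$; the factor $\sqrt{3}$ appears in both numerator and denominator because the threshold is chosen to equate the Chebyshev tail with the concentration tail inherited from Lemma~\ref{prop:log-ratio}. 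The main obstacle is bookkeeping the universal constants so that the exponent $1/(8\gamma)$ actually emerges rather than a looser $1/(4\gamma)$. This likely requires re-deriving the Chatterjee-Diaconis estimate with the sub-exponential tail inserted at the correct stage (rather than treating their bound as a black box), and carefully tracking how the $\sqrt{3}$'s propagate through the $L^2$-to-Markov passage.
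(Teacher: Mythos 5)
Your plan starts from the right ingredients---Lemma~\ref{prop:log-ratio} for the tail of $\log w$ and Theorem~1.2 of Chatterjee--Diaconis for the self-normalized estimator---and your parameterization $n=\beta e^{L}$, the role of $t^*(x)$, and the origin of $\gamma=\max(1,4\norm{A(x)}_2)$ all match the paper's argument. The gap is in the final conversion. Chatterjee--Diaconis Theorem~1.2 is already stated as a probability bound: with $\epsilon=\bigl(e^{-t/4}+2\,e^{-t/(16\norm{A(x)}_2)}\bigr)^{1/2}$ one gets $\mathbb{P}\bigl(|\hat{\mathcal{Q}}^n_{\textrm{IS}}-\mathcal{Q}|\geq 2\norm{f}_2\,\epsilon/(1-\epsilon)\bigr)\leq\epsilon$ directly, and the paper simply bounds $\epsilon\leq\sqrt{3}\,e^{-t/(8\gamma)}$ (the exponent $1/(8\gamma)$ comes from the outer square root in the definition of $\epsilon$, not from re-deriving their estimate) and uses monotonicity of $x\mapsto x/(1-x)$ before substituting $e^t=\beta$. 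Your route instead passes through a second-moment bound $\mathbb{E}|\hat{\mathcal{Q}}^n_{\textrm{IS}}-\mathcal{Q}|^2\leq 3\kappa\,\beta^{-1/(4\gamma)}$ followed by Chebyshev, and this does not work for two reasons.

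First, no such moment bound is available: Chatterjee--Diaconis give an $L^1$ bound only for the \emph{unnormalized} estimator, precisely because the self-normalized estimator is a ratio and $f$ is only assumed square-integrable under the posterior $p_\theta(z\mid x)$, not under the proposal $q_\phi$, so its second moment under the sampling distribution need not even be finite. Second, even granting that bound, the arithmetic fails: Chebyshev at the threshold $a=2\sqrt{3\kappa}/(\beta^{1/(8\gamma)}-\sqrt{3})$ gives
\begin{align}
\frac{3\kappa\,\beta^{-1/(4\gamma)}}{a^2}=\frac{\bigl(\beta^{1/(8\gamma)}-\sqrt{3}\bigr)^2}{4\,\beta^{1/(4\gamma)}}\longrightarrow\frac{1}{4},
\end{align}
which does not vanish and in particular is not $\sqrt{3}\,\beta^{-1/(8\gamma)}$. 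The fix is to drop the moment-plus-Chebyshev step entirely and use the probability form of Theorem~1.2 as a black box, which is exactly what the paper does.
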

Theorem~\ref{thm:linear_VAE} identifies a key quantity---the spectral norm of $A(x)$---as useful for controlling the sample efficiency of the IS estimator. The closed-form expression for $\norm{A(x)}_2$ from Eq.~\eqref{eq:A} suggests overestimating the posterior variance is often more suitable than underestimating variance. For the one-dimensional problem,  $\norm{A(x)}_2$ is indeed asymmetric around its global minimum, favoring larger values of $D(x)$.

As a consequence of this result, we can characterize the behavior of several variational inference frameworks such as WW, $\rchi$-VAE, IWAE, and VAE (in this case, the model is fixed). We provide this analysis for a bivariate Gaussian example in which all the quantities of interest can be visualized (for full derivations, see Appendix~\ref{app:biv_gauss}).
Figure~\ref{fig:norm_A}A shows that the VAE underestimates the variance, while other frameworks provide adequate coverage of the posterior. As expected, $\norm{A}_2$ is significantly smaller for WW and the $\rchi$-VAE than for the VAE (Figure~\ref{fig:norm_A}B).

\subsection{Model selection}
In the VAE framework, the model parameters must also be learned. For a fixed variational distribution, variational Bayes (VB) selects the model that maximizes the ELBO or the IWELBO. Each approximate posterior inference method proposes a different lower bound. Even though all these lower bounds become tight (equal to the evidence) with an infinite number of particles, different proposal distribution may not perform equally for model selection with a finite number of particles, as is necessary in practice. Moreover, because the optimal IS proposal depends on the target function~\cite{mcbook}, a good proposal for model learning may not be desirable for decision-making and vice versa. 

We can further refine this statement in the regime with few particles. VB estimates of the model parameters are expected to be biased towards the regions where the variational bound is tighter~\cite{Turner2011}. For a single particle, the tightness of the IWELBO (hence equal to the ELBO) is measured by the reverse KL divergence:
\begin{align}
\label{eq:model_selection}
    \kl{q_\phi}{p_\theta} = \frac{1}{2}\left[\textrm{Tr}\left[\Sigma(\theta)^{-1}D(x)\right] + \log \det{\Sigma(\theta)}\right] + C.
\end{align}
Here $C$ is constant with respect to $\theta$. Because $\kl{p_\theta}{q_\phi}$ is linear in $D(x)$, a higher variance $D(x)$ in Eq.~\eqref{eq:model_selection} induces a higher sensitivity of variational Bayes in parameter space. For multiple particles, no closed-form solution is available so we proceed to numerical experiments on the bivariate pPCA example. We choose five particles. In this setting, the approximate posteriors are fit for an initial value of the parameter $\lambda = 0$ (the real value is $\lambda=0.82$) with all other parameters set to their true value. In Figure~\ref{fig:norm_A}C, WW and the $\rchi$-VAE exhibit a higher sensitivity than the VAE and the IWAE, similar to the case with a single particle. This sensitivity translates into higher bias for selection of $\lambda$ (Figure~\ref{fig:norm_A}D). These results suggest that lower variance proposals may be more suitable for model learning than for decision-making, providing yet another motivation for using different proposals for each task.

\begin{figure}
\begin{minipage}{0.55\textwidth}
\centering
\captionsetup{type=table}
\begin{small}
\begin{sc}
\caption{Results on the pPCA simulated data. MAE refers to the mean absolute error in posterior expectation estimation.}
\label{table:mixture}
\begin{tabular}{lcccc}
\toprule
& \textbf{VAE} & \textbf{IWAE} & \textbf{WW} & \textbf{$\rchi$-VAE}\\
\midrule
$\log p_\theta(X)$  & -17.65 & \textbf{-16.91} & -16.93 & -16.92\\
IWELBO  & -17.66 & \textbf{-16.92} & -16.96 & \textbf{-16.92}\\
\midrule
$\norm{A}_2$  & 1.69 & 1.30 & 2.32 & \textbf{1.13}\\
PSIS  & 0.54 & 0.53 & 0.66 & \textbf{0.47}\\
\midrule
MAE  & 0.103 & 0.032 & 0.043 & \textbf{0.030}\\
\bottomrule
\end{tabular}
\end{sc}
\end{small}
\vspace{-0.3cm}

\end{minipage}
\hfill
\begin{minipage}{0.4\textwidth}
\centering
\includegraphics[width=0.9\textwidth]{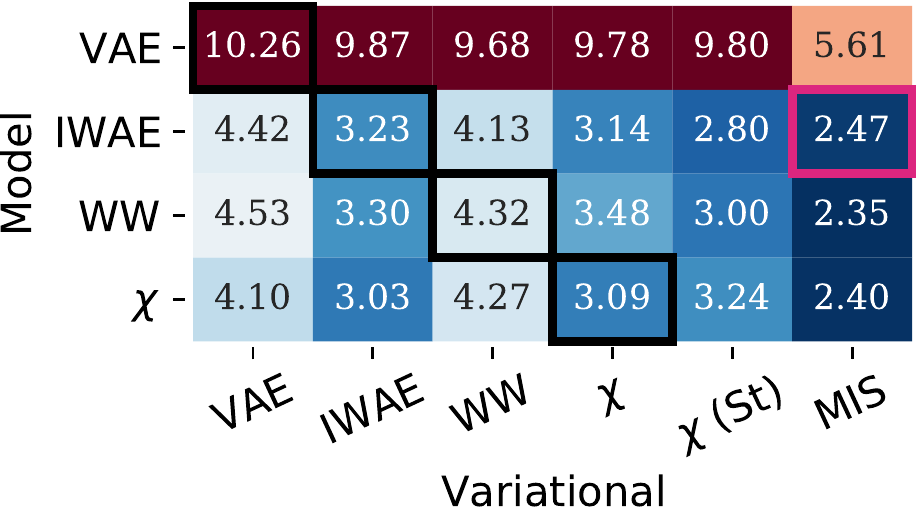}
\caption{MAE ($\times 100$) for the pPCA model. Each row corresponds to an objective function for fitting the model parameters and each column corresponds to an objective function for fitting the variational parameters.
}
\label{fig:pPCA-cross}
\end{minipage}
\vspace{-0.3cm}
\end{figure}

\subsection{pPCA experiments}
\label{posterior_query}
We now investigate the behavior of WW, the $\rchi$-VAE, IWAE, and the VAE for the same model, but in a higher-dimensional setting. In particular, we generate synthetic data according to the pPCA model described in Eq.~\eqref{eq:lin_vae_gen} and assess how well these variants can estimate posterior expectations. Our complete experimental setup is described in Appendix~\ref{app:experiment1}. We use a linear function for the mean and the log-variance of the variational distribution. This is a popular setup for approximate inference diagnostics~\cite{Turner2011} since the posterior has an analytic form, and the form is not typically a mean-field factorization. We propose a toy example of a posterior expectation, $\mathcal{Q}(f_\nu, x) = p_\theta(z_1 \geq \nu \mid x)$, obtained for $f_\nu(z) = \mathds{1}\{e_1^\top z \geq \nu\}$ with $e_1$ as the first vector of the canonical basis of $\mathbb{R}^d$. For this choice of $f_\nu$, the resulting posterior expectation is tractable since it is the cumulative density function of a Gaussian distribution. 

To evaluate the learned generative model, we provide goodness-of-fit metrics based on IWELBO on held-out data, as well as the exact log-likelihood. In addition, we report the Pareto-smoothed importance sampling (PSIS) diagnostic \smash{$\hat{k}$} for assessing the quality of the variational distribution as a proposal~\cite{pmlr-v80-yao18a}. PSIS is computed by fitting a generalized Pareto distribution to the IS weights; it indicates the viability of the variational distribution for IS. For multiple importance sampling (MIS), we combine the proposal (learned on the same model) from IWAE, WW, and $\rchi$-VAE, as well as samples from the prior~\cite{hesterberg1995weighted}, with equal weights.

When not stated otherwise, we report the median PSIS over $64$ observations, using $5,000$ posterior samples.
Unless stated otherwise, we use $30$ particles per iteration for training the models (as in~\cite{rainforth2018tighter}), $10,000$ samples for reporting log-likelihood proxies, and $200$ particles for making decisions. 
All results are averaged across five random initializations of the neural network weights. 

Table~\ref{table:mixture} contains our results for approximating the pPCA model with 5 particles.
IWAE, WW, and the $\rchi$-VAE outperform the VAE in terms of held-out exact likelihood. There is a slight preference for IWAE.
In terms of posterior approximation, all algorithms yielded a reasonable value for the PSIS diagnostic, in agreement with the spectral norm of $A$. PSIS values are not directly comparable between models, as they only measure the suitability of the variational distribution for a particular model.
In terms of mean absolute error (MAE), IWAE, and $\rchi$-VAE achieved the best result. For the VAE, the plugin estimator attained performance similar to the SNIS estimator.

For Figure~\ref{fig:pPCA-cross}, we fix these models and learn several proposal distributions for estimating the posterior expectations. For a fixed model, the proposal from the $\rchi$-VAE often improves the MAE, and the one from MIS performs best. The three-step procedure (IWAE-MIS, shown in red) significantly outperforms all of the single-proposal methods (shown in black). Notably, the performance of WW when used to learn a proposal is not as good as expected. Therefore, we ran the same experiment with a higher number of particles and reported the results in Appendix~\ref{app:experiment1}. 
Briefly, WW learns the best model for 200 particles, and our three-step procedure (WW-MIS) still outperforms all existing methods. 

With respect to the $\rchi$-VAE, using a Student's t distribution as the variational distribution (in place of the standard Gaussian distribution) usually improves the MAE. 
Because the Student's t distribution is also better in terms of theoretical motivation (the heavier tails of the Student's t distribution also help to avoid an infinite $\rchi^2$ divergence), we always use it with the $\rchi$-VAE for the real-world applications.

\section{Classification-based decision theory}
\label{label_decision}

We consider the MNIST dataset, which includes features $x$ for each of the images of handwritten digits and a label $c$. We aim to predict whether $x$ has its label included in a given subset of digits (from zero to eight). We also allow no decision to be made for ambiguous cases (the ``reject'' option)~\cite{Bishop:2006:PRM:1162264}.

We split the MNIST dataset evenly between training and test datasets.
For the labels 0 to 8, we use a total of $1,500$ labeled examples. All images associated with label 9 are unlabelled. We assume in this experiment that the MNIST dataset has $C = 9$ classes. For $c \in \{0, \ldots, C-1\}$, let $p_\theta(c \mid x)$ denote the posterior probability of the class $c$ for a model yet to be defined. Let $L(a, c)$ be the loss defined over the action set $\mathcal{A} = \{\varnothing, 0, \ldots, C-1\}$. Action $\varnothing$ is known as the rejection option in classification (we wish to reject label 9 at decision time). For this loss, it is known that the optimal decision $a^*(x)$ is a threshold-based rule~\cite{Bishop:2006:PRM:1162264} on the posterior probability. This setting is fundamentally different than traditional classification because making an informed decision requires knowledge of the full posterior $p_\theta(c \mid x)$, not just the maximum probability class. The Bayes optimal decision rule is based on the posterior expectation $\mathcal{Q}(f, x)$ for $z = c$, where $f$ a constant unit function. We provide a complete description of the experimental setup, as well as derivations for the plugin estimator and the SNIS estimator, in Appendix~\ref{app:experiment2}. To our knowledge, this is the first time semi-supervised generative models have been evaluated in a rejection-based decision-making scenario. 

As a generative model, we use the M1+M2 model for semi-supervised learning~\cite{KingmaRMW14}.
In the M1+M2 model, the discrete latent variable $c$ represents the class. Latent variable $u$ is a low-dimensional vector encoding additional variation. Latent variable $z$ is a low-dimensional representation of the observation. It is drawn from a mixture distribution with mixture assignment $c$ and mixture parameters that are a function of $u$.
The generative model is
\begin{align}
\label{eq:m1m2_gen}
    p_\theta(x, z, c, u) &= p_\theta(x \mid z)p_\theta(z \mid c, u)p_\theta(c)p_\theta(u).
\end{align}
The variational distribution factorizes as
\begin{align}
q_\phi(z, c, u \mid x) &= q_\phi(z \mid x)q_\phi(c \mid z)q_\phi(u \mid z, c).
\end{align}
Because the reverse KL divergence can cover only one mode of the distribution, it is prone to attributing zero probability to many classes; some other divergences would penalize this behavior. Appendix~\ref{app:m1m2} further describes why the M1+M2 model trained as a VAE may be overconfident and why WW and the $\rchi$-VAE can remedy this problem. To simplify the derivation of updates for the M1+M2 model with all of the algorithms we consider, we use the Gumbel-softmax trick~\cite{jang2017categorical} for latent variable $c$ for unlabelled observations. We fit the M1+M2 model with only nine classes and consider as ground truth that images with label 9 should be rejected at decision time.

\begin{figure}
\begin{minipage}{0.55\textwidth}
\centering
\captionsetup{type=table}
\begin{small}
\begin{sc}
\caption
{
Results for the M1+M2 model on MNIST. AUPRC refers to the area under of the PR curve for rejecting the label 9. 
}
\label{table:m1m2}
\begin{tabular}{lccccc}
\toprule
 & \textbf{VAE} & \textbf{IWAE}& \textbf{WW} & \textbf{$\rchi$-VAE} \\ 
 \midrule
IWELBO  & -104.74 & \textbf{-101.92} & -102.82 & -105.29\\
PSIS & $\gtrsim$ 1 & $\gtrsim$ 1 & $\gtrsim$ 1 & $\gtrsim$ 1\\
AUPRC & 0.35 & \textbf{0.45} & 0.29 & 0.44 \\
\bottomrule
\end{tabular}
\end{sc}
\end{small}
\vspace{-0.3cm}
\end{minipage}
\hfill
\begin{minipage}{0.4\textwidth}
\centering
\includegraphics[width=0.8\textwidth]{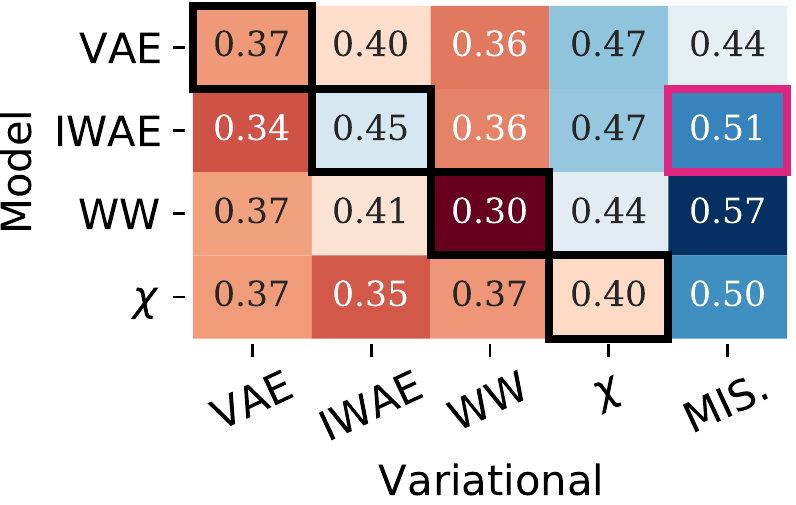}
\caption{AUPRC for MNIST.
Each row corresponds to an objective function for fitting the model parameters and each column corresponds to an objective function for fitting the variational parameters.
}
\label{fig:MNIST-cross}
\end{minipage}
\vspace{-0.4cm}
\end{figure}

Table~\ref{table:m1m2} gives our results, including the area under the precision-recall curve (AUPRC) for classifying ``nines'', as well as goodness-of-fit metrics. IWAE and WW learn the best generative model in terms of IWELBO. For all methods, we compare the classification performance of the plugin and the SNIS estimator on labels 1 through 8. While the plugin estimator shows high accuracy across all methods (between $95\%$ and $97\%$), the SNIS estimator shows poor performance (around $60\%$). This may be because the PSIS diagnosis estimates are greater than one for all algorithms, which indicates that the variational distribution may lead to large estimation error if used as a proposal~\cite{pmlr-v80-yao18a}. 
Consequently, for this experiment we report the result of the plugin estimator. Figure~\ref{fig:MNIST-cross} shows the results of using different variational distributions with the model held fixed. These results suggest that using the $\rchi$-VAE or MIS on a fixed model leads to better decisions. Overall, our three-step procedure (IWAE-MIS) outperforms all single-proposal alternatives.

\section{Multiple testing and differential gene expression}
\label{FDR}

We present an experiment involving Bayesian hypothesis testing for detecting differentially expressed genes from single-cell transcriptomics data, a central problem in modern molecular biology. Single-cell RNA sequencing data (scRNA-seq) provides a noisy snapshot of the gene expression levels of cells~\cite{Wagner2016, Tanay2017}. It can reveal cell types and gene markers for each cell type as long as we have an accurate noise model~\cite{Grun2014}. scRNA-seq data is a cell-by-gene matrix $X$.
For cell $n=1,\ldots,N$ and gene $g=1,\ldots,G$,
entry $X_{ng}$ is the number of transcripts for gene $g$ observed in cell $n$.
Here we take as given that each cell comes paired with an observed cell-type label $c_n$. 

Single-cell Variational Inference (scVI)~\cite{Lopez292037} is a hierarchical Bayes model~\cite{GelmanHill:2007} for scRNA-seq data. For our purposes here, it suffices to know that the latent variables $h_{ng}$ represent the underlying gene expression level for gene $g$ in cell $n$, corrected for a certain number of technical variations (e.g., sequencing depth effects). The corrected expression levels $h_{ng}$ are more reflective of the real proportion of gene expression than the raw data $X$~\cite{Cole2017}. Log-fold changes based on $h_{ng}$ can be used to detect differential expression (DE) of gene $g$ across cell types $a$ and $b$~\cite{deseq2,Boyeau794289}. Indeed, Bayesian decision theory helps decide between a model $\mathcal{M}_1^g$ in which gene $g$ is DE and an alternative model $\mathcal{M}_0^g$ in which gene $g$ is not DE. The hypotheses are
\begin{align}
    \mathcal{M}^g_1: \left|\log\frac{h_{ag}}{h_{bg}}\right| \geq \delta ~~~~ \text{and}~~~ \mathcal{M}^g_0: \left|\log\frac{h_{ag}}{h_{bg}}\right| < \delta,
    \label{hypotheses}
\end{align}
where $\delta$ is a threshold defined by the practitioner. DE detection can therefore be performed by posterior estimation of log-fold change between two cells, $x_a$ and $x_b$, by estimating
\(
p_\theta(\mathcal{M}^g_1 \mid x_a, x_b)
\)
with importance sampling.
The optimal decision rule for 0-1 loss is attained by thresholding the posterior log-fold change estimate.
Rather than directly setting this threshold, a practitioner typically picks a false discovery rate (FDR) $f_0$ to target. To control the FDR, 
we consider the multiple binary decision rule $\mu^k = (\mu_g^k, g \in G)$ that consists of tagging as DE the $k$ genes with the highest estimates of log-fold change.
With this notation, the false discovery proportion (FDP) of such a decision rule is
\begin{align}
    \mathrm{FDP} = 
        \frac
        {\sum_g (1 - \mathcal{M}_1^g) \mu_g^k}
        {\sum_g \mu_g^k}.
\end{align}
Following~\cite{Cui2015}, we define the posterior expected FDR as
\(
\overline{\mathrm{FDR}} := \mathbb{E}\left[\mathrm{FDP} \mid x_a, x_b\right],
\)
which can be computed from the differential expression probabilities of Eq.~\eqref{hypotheses}. We then set $k$ to the maximum value for which the posterior expected FDR is below $f_0$. 
In this case, controlling the FDR requires estimating $G$ posterior expectations. 
To quantify the ability of each method to control the FDR, we report the mean absolute error between the ground-truth FDR and the posterior expected FDR.

\begin{figure}
\begin{minipage}{0.55\textwidth}
\centering
\captionsetup{type=table}
\caption
{
Results for the scVI model. MAE FDR refers to the mean absolute error for FDR estimation.}
\label{table_fdr}
\begin{center}
\begin{small}
\begin{sc}
\begin{tabular}{lccccccc}
\toprule
 & \textbf{VAE}  & \textbf{IWAE} & \textbf{WW} & \textbf{$\chi$-VAE} \\ \midrule
AIS  & -380.42 & -372.86 & -372.78 & \textbf{-371.69}\\
IWELBO  & -380.42 & -372.86 & -372.78 & \textbf{-371.69}\\
\midrule
PSIS  & 0.71 & 0.49 & \textbf{0.47} & 0.69\\
\midrule
MAE FDR  & 5.78 & 0.39 & 0.51 & \textbf{0.27}\\
\bottomrule
\end{tabular}
\end{sc}
\end{small}
\end{center}
\vspace{-0.3cm}
\end{minipage}
\hfill
\begin{minipage}{0.4\textwidth}
\centering
\includegraphics[width=0.8\textwidth]{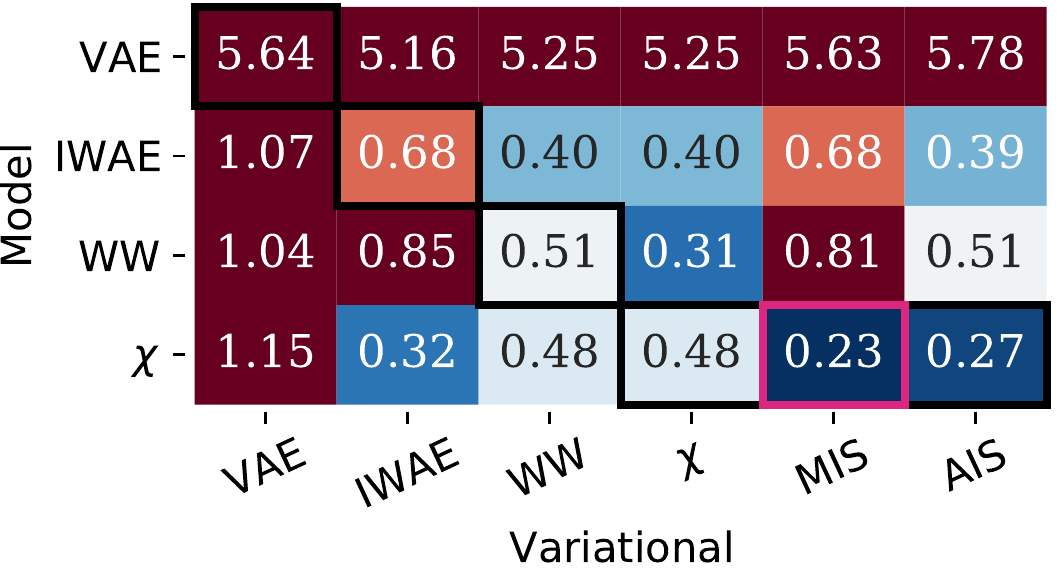}
\caption{FDR MAE for scVI. 
Each row corresponds to an objective function for fitting the model parameters and each column corresponds to an objective function for fitting the variational parameters.
}\label{fig:scVI-cross}
\end{minipage}
\vspace{-0.3cm}
\end{figure}

We fitted the scVI model with all of the objective functions of interest. In addition, we used annealed importance sampling (AIS)~\cite{neal2001annealed} to approximate the posterior distribution once the model was fitted. AIS is computationally intensive, so we used $500$ steps and $100$ samples from the prior to keep the runtime manageable. Because the ground-truth FDR cannot be computed on real data, we generated scRNA-seq data according to a Poisson log-normal model (a popular choice for simulating scRNA-seq data~\cite{townes2019feature,Crowell713412}) from five distinct cells types corresponding to $10,000$ cells and $100$ genes. This model is meant to reflect both the biological signal and the technical effects of the experimental protocol. We provide further details of this experiment in Appendix~\ref{app:experiment3}.

VAE performs worst in terms of held-out log-likelihood, while $\rchi$-VAE performs best (Table~\ref{table_fdr}).
We also evaluated the quality of the gene ranking for differential expression based on the AUPRC of the DE probabilities. The model learned with the VAE objective function has an AUPRC of 0.85, while all other combinations have an AUPRC of more than 0.95. 

Next, we investigate the role posterior uncertainty plays in properly estimating the FDR.  
Table \ref{table_fdr} and Figure~\ref{fig:scVI-cross} report the mean absolute error (MAE) for FDR estimation over the 100 genes. The posterior expected FDR has a large error for any proposal based on the VAE model, which gets even worse using the plugin estimator. However, all of the other models yield a significantly lower error for any proposal. We provided the FDR curves in Appendix~\ref{app:experiment3}. The VAE has highly inaccurate FDR control. 
This suggests that the model learned by the original scVI training procedure (i.e., a VAE) cannot be used for FDR control even with AIS as the proposal. 
IWAE, WW, and $\rchi$-VAE, on the other hand, may be useful to a practitioner. Further, in this experiment WW slightly outperforms IWAE in terms of held-out IWELBO but not in terms of FDR MAE. Again, the variational distribution used to train the best generative model does not necessarily provide the best decisions.
Conversely, our three-step procedure ($\rchi$-VAE-MIS) has the best FDR estimates for this experiment, improving over all the single-proposal alternatives, as well as over the $\rchi$-VAE in conjunction with AIS. 
\section{Discussion}
We have proposed a three-step procedure for using variational autoencoders for decision-making. This method is theoretically motivated by analyzing the derived error of the self-normalized importance sampling estimator and the biases of variational Bayes on the pPCA model. Our numerical experiments show that in important real-world examples this three-step procedure outperforms VAE, IWAE, WW, $\rchi$-VAE, and IWAE in conjunction with AIS.

The proposed procedure requires fitting three VAEs, each with a different loss function. This causes computational overhead in our method compared to a standard VAE. However, this overhead is not large (roughly a constant factor of three) since our method simply consists of training three VAEs, each with a different loss function. In the pPCA experiment, training a single VAE takes 12 seconds, while step one and two of our method together take 53 seconds (on a machine with a single NVIDIA GeForce RTX 2070 GPU). Step three of our method does not introduce any additional computations. In cases where an offline decision is made (for example, in biology), we do not expect the overhead of our method to create a bottleneck.

Posterior collapse is a much studied failure mode of VAEs in which the variational distribution equals the prior and the conditional likelihood of the data is independent of the latent variables~\cite{NIPS2019_9138}.
In Appendix~\ref{app:experiment1}, we show that two techniques proposed to mitigate the problem of posterior collapse, cyclical annealing~\cite{fu2019cyclical} and lagging inference networks~\cite{he2018lagging}, do indeed improve the performance of VAEs. However, the three-step procedure we propose still outperforms them.

A complementary approach to our own for decision-making is the elegant framework of loss-calibrated inference~\cite{lacoste2011approximate,NIPS2019_8868}, which adapts the ELBO to take into account the loss function $L$. Similarly, amortized Monte Carlo integration~\cite{golinski2019amortized} proposes to fit a variational distribution for a particular expectation of the posterior. Neither approach is directly applicable to our setting because adapting the ELBO to a specific decision-making loss implies a bias in learning $p_\theta$. However, these approaches could potentially improve steps two and three of our proposed framework. Consequently, developing hybrid algorithms is a promising direction for future research.

\section*{Broader impact}

The method we propose allows practitioners to make better decisions based on data. This capability may improve decisions about topics as diverse as differential expression in biological data, supply chain inventory and pricing, and personalized medicine treatments. However, the black-box nature of neural networks---a key aspect of our approach---confers both benefits and risks. For data without a simple parametric distribution, neural networks allow us to fit a model accurately, so that we can make decisions in a rigorous data-driven way without recreating prior biases. However, it can be difficult to check the quality of the model fit, particularly when worst-case analysis is appropriate, e.g., in mission-critical applications. In VAE-style architectures, powerful decoder networks are associated with posterior collapse, which could go undetected. More research is needed to ensure the worst-case performance of VAE-style models and/or to diagnosis fit problems before they are used in high-stakes decision-making scenarios.

\begin{ack}
We thank Chenling Xu for suggestions that helped with the third experiment. We thank Adam Kosiorek and Tuan-Anh Le for answering questions about implementation of the Wake-Wake algorithm. We thank Geoffrey Negiar, Adam Gayoso, and Achille Nazaret for their feedback on this article. NY and RL were supported by grant U19 AI090023 from NIH-NIAID. 
\end{ack}

\bibliographystyle{unsrt}
{\footnotesize
\bibliography{biblio}}

\newpage

\appendix
\part*{Appendices}
 
In Appendix~\ref{app:proofs}, we prove the theoretical results presented in this manuscript. Appendix~\ref{app:chi-vaes} presents details of the $\rchi$-VAE. Appendix~\ref{app:err_simple_bounds} provides known bounds for posterior expectation estimators. In Appendix~\ref{app:biv_gauss}, we present the analytical derivations in the bivariate Gaussian setting. 
Appendix~\ref{app:experiment1} and~\ref{app:experiment2} provide details of the pPCA and MNIST experiments. In Appendix~\ref{app:m1m2}, we discuss why alternative divergences would be especially suitable for the M1+M2 model. Appendix~\ref{app:experiment3} presents additional information about the single-cell transcriptomics experiment.

\section{Proofs}
\label{app:proofs}

\subsection{Proof of Lemma~\ref{prop:log-ratio}}

\lemmalogratio*

\begin{proof}
Here we first give the closed-form expression of the posterior and then prove the concentration bounds on the log-likelihood ratio. Let $M = W^\top W+\sigma^2I$. For notational convenience, we do not explicitly denote dependence on random variable $x$.

\textit{Step 1: Tractable posterior.} Using the Gaussian conditioning formula~\cite{Bishop:2006:PRM:1162264}, we have that 
\begin{align}
    p_\theta(z \mid x) = \textrm{Normal}\left(M^{-1}W^\top (x - \mu), \sigma^2M^{-1}\right).
\end{align} 

\textit{Step 2: Concentration of the log-ratio.} For this, since $x$ is a fixed point, we note $a = M^{-1}W^\top (x - \mu)$ and $b = \nu(x)$. We can express the log density ratio as
\begin{align}
    w(z, x) &= \log \frac{p_\theta(z \mid x)}{q_\phi(z \mid x)} \\
    &= -\frac{1}{2}\log\det(\sigma^2M^{-1} D^{-1}) -\frac{1}{2\sigma^2}(z - a)^\top M(z-a) + \frac{1}{2}(z - b)^\top D^{-1}(z - b) \\
    &= C + z^\top [\frac{D^{-1}}{2}-\frac{M}{2\sigma^2}]z + [D^{-1}b -\frac{Ma}{\sigma^2}]^\top z,
\end{align}
where $C$ is a constant. To further characterize the tail behavior, let $\epsilon$ be an isotropic multivariate normal distribution, and let us express the log-ratio as a function of $\epsilon$ instead of the posterior probability. We have that $z = M^{-1}W^\top (x - \mu) + \sigma M^{-\nicefrac{1}{2}}\epsilon$. The log ratio can now be written as
\begin{align}
    \log w(z, x) &= C' + \epsilon^\top [\frac{\sigma^2M^{-\nicefrac{1}{2}}D^{-1}M^{-\nicefrac{1}{2}} - I}{2}]\epsilon + [\sigma M^{-\nicefrac{1}{2}}D^{-1}b -\frac{M^{\nicefrac{1}{2}}a}{\sigma}]^\top \epsilon.
\end{align}
Because $\epsilon$ is isotropic Gaussian, we can compute the deviation of this log-ratio and provide concentration bounds. Because $\epsilon$ is Gaussian and $\epsilon \mapsto \log w(z, x)$ is a quadratic function, we show that the log-ratio under the posterior is a sub-exponential random variable. 

The following lemma makes this statement precise, and it carries an implication similar to the classic result in~\cite{laurent2000}.
\begin{lemma}
\label{lemma:sub_exp}
Let $d \in \mathbb{N}^*$ and $\epsilon \sim \mathrm{Normal}(0, I_d)$. For matrix $A \in \mathbb{R}^{d \times d}$ and vector $b \in \mathbb{R}^d$, random variable $v = \epsilon^\top A\epsilon + b^\top \epsilon$ is sub-exponential with parameters $(\sqrt{2\norm{A}^2_F + \nicefrac{\norm{b}_2^2}{4}}, 4\norm{A}_2)$. In particular, we have the following concentration bounds:
\begin{align}
    \mathbb{P}\left[|v| \geq t\right] & \leq 2 \exp\left\{-\frac{t^2}{8\norm{A}^2_F + \norm{b}_2^2 + 4\norm{A}_2t}\right\}~~~~~\textrm{for all}~t >0. \\
    \mathbb{P}\left[|v| \geq t\right] & \leq \exp\left\{-\frac{t}{8\norm{A}_2}\right\}~~~~~\textrm{for all}~t > \frac{8\norm{A}^2_F + \norm{b}_2^2}{16\norm{A}_2}. 
\end{align}
\end{lemma}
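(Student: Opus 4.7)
The plan is a standard Hanson--Wright-style computation: reduce to independent coordinates by rotational invariance, evaluate the moment generating function (MGF) of the centered quadratic-plus-linear form exactly via a Gaussian integral, apply a Taylor bound to expose the sub-exponential form, and finish with Chernoff. I will work throughout with the centered random variable $v - \mathbb{E} v$, since the sub-exponential definition and the advertised tail bounds only make sense after centering (the preceding Lemma~\ref{prop:log-ratio} in fact applies this bound to the centered log-ratio $\log w - \kl{p_\theta}{q_\phi}$). First, since $\epsilon^\top A\epsilon = \epsilon^\top A^{\mathrm{sym}}\epsilon$ with $A^{\mathrm{sym}} = (A+A^\top)/2$ and symmetrization can only decrease both norms, assume $A = U\Lambda U^\top$ with $\Lambda = \mathrm{diag}(\lambda_1,\ldots,\lambda_d)$. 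Rotational invariance of the isotropic Gaussian gives $\tilde\epsilon := U^\top\epsilon \sim \mathrm{Normal}(0, I_d)$; setting $\tilde b := U^\top b$ (which preserves $\norm{b}_2$) turns the random variable into a sum of independent one-dimensional terms, $v = \sum_i(\lambda_i\tilde\epsilon_i^2 + \tilde b_i\tilde\epsilon_i)$.

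For each coordinate, completing the square in a one-dimensional Gaussian integral gives the closed-form MGF
\[
\mathbb{E}\bigl[e^{s(\lambda_i(\tilde\epsilon_i^2-1)+\tilde b_i\tilde\epsilon_i)}\bigr] = (1-2s\lambda_i)^{-1/2}\,e^{-s\lambda_i}\exp\!\left(\frac{s^2\tilde b_i^2}{2(1-2s\lambda_i)}\right),
\]
valid whenever $1-2s\lambda_i>0$. Summing log-MGFs, for $|s|<1/(2\norm{A}_2)$,
\[
\log\mathbb{E}\bigl[e^{s(v-\mathbb{E}v)}\bigr] = \sum_i\Bigl[-\tfrac12\log(1-2s\lambda_i) - s\lambda_i\Bigr] + \sum_i\frac{s^2\tilde b_i^2}{2(1-2s\lambda_i)}.
\]
Restricting to $|s|\leq 1/(4\norm{A}_2)$ so that $|2s\lambda_i|\leq 1/2$, the elementary inequality $0 \leq -\log(1-x)-x \leq x^2/(2(1-|x|))$ bounds the first sum by a constant multiple of $s^2\norm{A}_F^2$, while the bound $1-2s\lambda_i\geq 1/2$ controls the second sum by a constant multiple of $s^2\norm{b}_2^2$. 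This yields the sub-exponential MGF bound $\log\mathbb{E}[e^{s(v-\mathbb{E} v)}]\leq \nu^2 s^2/2$ for $|s|\leq 1/\alpha$ with the advertised parameters $(\nu,\alpha)$. Both displayed tail inequalities then follow from standard Chernoff plus a two-sided union bound: taking $s = \min(t/\nu^2,\,1/\alpha)$ in $\mathbb{P}[v-\mathbb{E} v\geq t]\leq \exp(\nu^2 s^2/2 - st)$ recovers the Gaussian regime when $t\leq \nu^2/\alpha$ and the pure-exponential regime for larger $t$, and the Bernstein-form bound $\exp(-t^2/(2(\nu^2+\alpha t)))$ interpolates the two.

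The main obstacle is bookkeeping the constants so that they match the advertised pair $\bigl(\sqrt{2\norm{A}_F^2+\norm{b}_2^2/4},\,4\norm{A}_2\bigr)$; the crudest bound $-\log(1-x)-x\leq x^2$ is slightly too loose, and squeezing out the correct $\nu^2$ requires tracking the quartic remainder in the Taylor series together with the denominator $1-2s\lambda_i\geq 1/2$ in the linear-in-$\tilde b$ term. Both signs of $\lambda_i$ are handled uniformly by the non-negativity $-\log(1-x)-x\geq 0$ on $x<1$, which prevents cancellation across coordinates. Finally, the lemma as stated writes $|v|$ rather than $|v-\mathbb{E} v|$; this is a mild abuse of notation, and the bounds should be understood as deviation-from-mean estimates, which is precisely what the downstream use in Lemma~\ref{prop:log-ratio} requires.
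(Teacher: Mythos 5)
Your proposal follows essentially the same route as the paper's proof: diagonalize $A$, use rotational invariance of the isotropic Gaussian to reduce to independent coordinates, compute each one-dimensional MGF exactly by completing the square, bound it with the elementary inequality $-\log(1-x)-x\le x^2$ on $|x|\le\nicefrac{1}{2}$ (the paper's $e^{-a}\le e^{2a^2}\sqrt{1-2a}$ is the same fact in disguise), and conclude with a Chernoff bound in the style of Proposition~2.3 of~\cite{wainwright_2019}. Your remark that the statement should be read as a deviation bound for $|v-\mathbb{E}v|$ rather than $|v|$ is correct and consistent with how the paper centers by $\Tr(A)$ in its own proof and by $\kl{p_\theta}{q_\phi}$ in the application.
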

For $A = \frac{\sigma^2M^{-\nicefrac{1}{2}}D^{-1}M^{-\nicefrac{1}{2}} - I}{2}$ and $b = \sigma M^{-\nicefrac{1}{2}}D^{-1}b -\frac{M^{\nicefrac{1}{2}}a}{\sigma}$, we can apply Lemma~\ref{lemma:sub_exp}. We deduce a concentration bound on the log-ratio around its mean, which is the forward Kullback-Leibler divergence $L = \kl{p_\theta(z \mid x)}{q_\phi(z \mid x)}$. More precisely, we have that
\begin{align}
    p_\theta\left(\left|\log w(z, x) - L\right| \geq t \mid x\right) & \leq 2 \exp\left\{-\frac{t^2}{8\norm{A}^2_F + \norm{b}_2^2 + 4\norm{A}_2t}\right\}~~~~~\textrm{for all}~t >0,
\end{align}
\end{proof}
as well as the deviation bound for large $t$, which ends the proof. 

\subsection{Proof of Lemma~\ref{lemma:sub_exp}}

\begin{proof}
Let $\lambda \in \mathbb{R}^+$. We have that $\mathbb{E}v = \Tr(A)$. We wish to bound the moment generating function
\begin{align}
    \mathbb{E}[e^{\lambda(v-\Tr(A))}] = e^{-\lambda \Tr(A)} \mathbb{E}[e^{\lambda(\epsilon^\top A\epsilon + b^\top \epsilon)}].
\end{align}
Sums of arbitrary correlated variables are hard to analyze.
Here we rely on the property that Gaussian vectors are invariant under rotation. Let $A = Q\Lambda Q^\top $ be the eigenvalue decomposition for $A$ and denote $\epsilon = Q\xi$ and $b = Q\beta$. Since $Q$ is an orthogonal matrix, $\xi$ also follows an isotropic normal distribution and
\begin{align}
    \mathbb{E}[e^{\lambda(v-\Tr(A))}] &= e^{-\lambda \Tr(A)} \mathbb{E}[e^{\lambda(\xi^\top \Lambda\xi + \beta^\top \xi)}] \\
    &= e^{-\lambda \Tr(A)} \mathbb{E}\left[ \prod_{i=1}^d e^{\lambda\xi_i^2\Lambda_i + \lambda \beta_i\xi_i}\right] \\
    &= e^{-\lambda \Tr(A)} \prod_{i=1}^d \mathbb{E} \left[e^{\lambda\xi_i^2\Lambda_i + \lambda \beta_i\xi_i}\right] \\
    &= \prod_{i=1}^d \mathbb{E} \left[e^{\lambda\xi_i^2\Lambda_i + \lambda \beta_i\xi_i - \lambda\Lambda_i}\right].
\end{align}
Because each component $\xi_i$ follows a isotropic Gaussian distribution, we can compute the moment generating functions in closed form:
\begin{align}
    \mathbb{E} \left[e^{\lambda\xi_i^2\Lambda_i + \lambda \beta_i\xi_i - \lambda\Lambda_i}\right] &= 
    \frac{e^{-\lambda\Lambda_i}}{\sqrt{2\pi}}\int_{-\infty}^{+\infty} e^{\lambda\Lambda_iu^2 + \lambda\beta_i u}e^{-\frac{u^2}{2}}du \\
    &=
    \frac{e^{-\lambda\Lambda_i}}{\sqrt{2\pi}}\int_{-\infty}^{+\infty} e^{[\lambda\Lambda_i -\frac{1}{2}]u^2 + \lambda\beta_i u}du.
\end{align}
This integral is convergent if and only if $\lambda < \nicefrac{1}{2\Lambda_i}$. In that case, after a change of variable, we have that
\begin{align}
    \mathbb{E} \left[e^{\lambda\xi_i^2\Lambda_i + \lambda \beta_i\xi_i - \lambda\Lambda_i}\right] &= 
    \frac{e^{-\lambda\Lambda_i}}{\sqrt{\pi}\sqrt{1 - 2\lambda\Lambda_i}}\int_{-\infty}^{+\infty} e^{-s^2 + \frac{\sqrt{2}\lambda\beta_i s}{\sqrt{1 - 2\lambda\Lambda_i}}}ds \\
    &= 
    \frac{e^{-\lambda\Lambda_i}e^{\frac{\lambda^2\beta_i^2}{2(1-2\lambda\Lambda_i)}}}{\sqrt{1 - 2\lambda\Lambda_i}}.
\end{align}
Then, using the fact that for $a < \nicefrac{1}{2}$, we have $e^{-a} \leq e^{2a^2}\sqrt{1 - 2 a}$, we can further simplify for $\lambda < \frac{1}{4\Lambda_i}$
\begin{align}
    \mathbb{E} \left[e^{\lambda\xi_i^2\Lambda_i + \lambda \beta_i\xi_i - \lambda\Lambda_i}\right] &\leq
    e^{2\lambda^2\Lambda_i^2 + \frac{\lambda^2\beta_i^2}{2(1-2\lambda\Lambda_i)}} \\
    &\leq
    e^{[2\Lambda_i^2 + \frac{\beta_i^2}{4}]\lambda^2}.
\end{align}
Putting back all the components of $\xi$, we have that for all $\lambda < \frac{1}{4\norm{\Lambda}_2} = \frac{1}{4\norm{A}_2}$
\begin{align}
    \mathbb{E}[e^{\lambda(v-\Tr(A))}] &\leq \exp\left\{\left[2\norm{\Lambda}^2_F + \frac{\norm{\beta}_2^2}{4}\right]\lambda^2\right\} \\
    &\leq \exp\left\{\left[2\norm{A}^2_F + \frac{\norm{b}_2^2}{4}\right]\lambda^2\right\},
\end{align}
where the last inequality is in fact an equality because $Q$ is an isometry. Therefore, according to Definition 2.2 in~\cite{wainwright_2019}, $v$ is sub-exponential with parameters $(\sqrt{2\norm{A}^2_F + \nicefrac{\norm{b}_2^2}{4}}, 4\norm{A}_2)$. The concentration bound is derived as in the proof of Proposition 2.3 in~\cite{wainwright_2019}.
\end{proof}

\subsection{Proof of Theorem~\ref{thm:linear_VAE}}

\thmppca*

\begin{proof}
Let $t = \ln \beta$. By Theorem 1.2 from~\cite{Chatterjee2018}, Lemma 1 for $t > t^*(x)$, and 
\begin{align}
\epsilon = \left(e^{-\frac{t}{4}} + 2e^{-\frac{t}{16\norm{A(x)}_2}}\right)^{\nicefrac{1}{2}},
\end{align}
we have that
\begin{align}
    \mathbb{P}\left(\left|\hat{\mathcal{Q}}^n_{\textrm{IS}}(f, x) -  \mathcal{Q}(f, x) \right| \geq \frac{2 \norm{f}_2\epsilon}{1 - \epsilon}\right) \leq \epsilon.
\end{align}
Now, let us notice that
\(
    \epsilon \leq \sqrt{3}e^{\frac{-t}{8\gamma}}
\)
and that $x \mapsto \nicefrac{x}{1-x}$ is increasing on $(0, 1)$. So we have that 
\begin{align}
    \mathbb{P}\left(\left|\hat{\mathcal{Q}}^n_{\textrm{IS}}(f, x) -  \mathcal{Q}(f, x) \right| \geq \frac{2\sqrt{3\kappa}}{e^{\frac{t}{8\gamma}} - \sqrt{3}}\right) \leq \sqrt{3}e^{\frac{-t}{8\gamma}}.
\end{align}
The bound in Theorem~\ref{thm:linear_VAE} follows by replacing $e^t$ by $\beta$ in the previous equation.
\end{proof}

\section{Chi-VAEs}
\label{app:chi-vaes}
We propose a novel variant of the WW algorithm based on $\chi^2$ divergence minimization, which is potentially well suited for decision-making. This variant is incremental in the sense that it combines several existing contributions such as the CHIVI procedure~\cite{NIPS2017_6866}, the WW algorithm~\cite{le2018revisiting}, and use of a reparameterized Student's t distributed variational posterior (e.g., the one explored in~\cite{NIPS2018_7699} for IWVI). However, we did not encounter prior mention of such a variant in the existing literature. 

In the $\rchi$-VAE, we update the model and the variational parameters as a first-order stochastic block coordinate descent (as in WW~\cite{le2018revisiting}). For a fixed inference model $q_\phi$, we take gradients of the IWELBO with respect to the model parameters. For a fixed generative model $p_\theta$, we seek to minimize the $\rchi^2$ divergence between the posterior and the inference model. This quantity is intractable, but we can formulate an equivalent optimization problem using the $\rchi$ upper bound (CUBO)~\cite{NIPS2017_6866}:
\begin{align}
\begin{split}
    \underbrace{\log p_\theta(x)}_{\text{evidence}} &= \underbrace{\frac{1}{2}\log\mathbb{E}_{q_\phi(z \mid x)}\left(\frac{p_\theta(x, z)}{q_\phi(z \mid x)}\right)^2}_{\text{CUBO}} - \underbrace{\frac{1}{2}\log\left(1 + \chidiv{p_\theta}{q_\phi}\right)}_{\text{$\chi^2$ VG}}. \label{CUBO}
\end{split}
\end{align}
It is known that the properties of the variational distribution (mode-seeking or mass-covering) highly depend on the geometry of the variational gap~\cite{futami2018variational}, which was our initial motivation for using the $\rchi$-VAE for decision-making. For a fixed model, minimizing the exponentiated CUBO is a valid approach for minimizing the $\rchi^2$ divergence. 

Finally, in many cases the $\rchi^2$ divergence may be infinite. This is true even for two Gaussian distributions provided that the variance of $q_\phi$ does not cover the posterior sufficiently. In our pPCA experiments, we found that using a Gaussian distributed posterior may still provide helpful proposals. However, we expect a Student's t distributed variational posterior to properly alleviate this concern. \cite{NIPS2018_7699} proposed a reparameterization trick for elliptical distributions, including the Student's t distribution based on the CDF of the $\rchi$ distribution. In our experiments, we reparameterized samples from a Student's t distribution with location $\mu$, scale $\Sigma = A^\top A$ and degrees of freedom $\nu$ as follows:
\begin{align}
    \delta &\sim \textrm{Normal}(0, I) \\
    \epsilon &\sim \chi^2_\nu\\
    t &\sim \sqrt{\frac{\nu}{\epsilon}}A^\top \delta + \mu,
\end{align} 
where we used reparameterized samples for the $\rchi^2_\nu$ distribution following~\cite{figurnov2018implicit}.

\section{Limitations of standard results for posterior statistics estimators}
\label{app:err_simple_bounds}

Neither~\cite{Chatterjee2018} nor~\cite{Agapiou2017} suggest that upper bounds on the error of the IS estimator may be helpful in comparing algorithmic procedures. Similarly, \cite{le2018revisiting} used the result from~\cite{Chatterjee2018} as a motivation but did not use it to support a claim of better performance over other methods. In this section, we outline two simple reasons why upper bounds on the error of IS are not helpful for comparing algorithms. 

We start by stating simple results of upper bounding the mean square error of the SNIS estimator. 
\begin{restatable}{prop}{propbounds}\label{prop:simple_bounds}\emph{(Deviation for posterior expectation estimates)} Let $f$ be a bounded test function. For the plugin estimator, we have
\begin{align}
\label{eq:up_plugin}
    \sup_{\norm{f}_\infty \leq 1}\mathbb{E}\left[ \left( \hat{\mathcal{Q}}^n_{\textrm{P}}(f, x) -  \mathcal{Q}(f, x) \right)^2 \right]     \leq & 4\tvsq{p_\theta}{q_\phi} + \frac{1}{2n},
\end{align}
where $\Delta_\textrm{TV}$ denotes the total variation distance. For the SNIS estimator, if we further assume that $w(x, z)$ has a finite second-order moment under $q_\phi(z \mid x)$, then we have
\begin{align}
\label{eq:up_snis}
    \sup_{\norm{f}_\infty \leq 1}\mathbb{E}\left[ \left( \hat{\mathcal{Q}}^n_{\textrm{IS}}(f, x) -  \mathcal{Q}(f, x) \right)^2 \right] \leq & \frac{4\chidiv{p_\theta}{q_\phi}}{n},
\end{align}
where $\Delta_{\chi^2}$ denotes the chi-square divergence.
\end{restatable}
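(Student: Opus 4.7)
The plan is to combine bias-variance style decompositions with variational characterizations of the divergences appearing on the right-hand sides. For the plugin estimator, since $z_1,\ldots,z_n$ are i.i.d.\ samples from $q_\phi(z\mid x)$ (not from the target $p_\theta$), I would decompose the mean square error as the sum of variance and squared bias,
\[
\mathbb{E}\bigl[(\hat{\mathcal{Q}}^n_P(f,x) - \mathcal{Q}(f,x))^2\bigr] = \mathrm{Var}(\hat{\mathcal{Q}}^n_P(f,x)) + \bigl(\mathbb{E}_{q_\phi}[f] - \mathbb{E}_{p_\theta}[f]\bigr)^2.
\]
The variance equals $\mathrm{Var}_{q_\phi}(f)/n$ and is controlled by $\|f\|_\infty^2/n$ via Popoviciu's inequality. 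The bias is the difference of expectations under $q_\phi$ and $p_\theta$, which the variational representation of total variation distance bounds by $2\|f\|_\infty \Delta_{\mathrm{TV}}(p_\theta, q_\phi)$; squaring the bias and adding the variance yields the stated inequality.

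For the SNIS estimator, I would introduce the normalized importance weights $W_i = p_\theta(z_i\mid x)/q_\phi(z_i\mid x)$ and write $\hat{\mathcal{Q}}^n_{IS} = A_n / B_n$ with $A_n = \tfrac{1}{n}\sum_{i=1}^n W_i f(z_i)$ and $B_n = \tfrac{1}{n}\sum_{i=1}^n W_i$, so that $\mathbb{E}_{q_\phi}[A_n] = \mathcal{Q}(f,x)$ and $\mathbb{E}_{q_\phi}[B_n] = 1$. The central identity is
\[
\hat{\mathcal{Q}}^n_{IS}(f,x) - \mathcal{Q}(f,x) = \frac{1}{B_n} \cdot \frac{1}{n}\sum_{i=1}^n W_i \bigl(f(z_i) - \mathcal{Q}(f,x)\bigr),
\]
in which each summand is centered under $q_\phi$ with second moment at most $4\|f\|_\infty^2 \mathbb{E}_{q_\phi}[W^2]$; since $\mathbb{E}_{q_\phi}[W^2] = 1 + \chidiv{p_\theta}{q_\phi}$ by direct computation, the variance of the sum scales like $\chi^2/n$, matching the form of the claimed bound.

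The \emph{main obstacle} is controlling the random denominator $B_n$, which is not bounded away from zero a priori. I would handle this by splitting on the event $\mathcal{G} = \{B_n \geq 1/2\}$: on $\mathcal{G}$ the inverse is deterministically bounded, so the $O(\chi^2/n)$ variance control of the numerator transfers directly; on $\mathcal{G}^c$ I would use the trivial envelope $|\hat{\mathcal{Q}}^n_{IS}(f,x) - \mathcal{Q}(f,x)| \leq 2\|f\|_\infty$ together with Chebyshev's inequality applied to $B_n$, whose variance equals $\chidiv{p_\theta}{q_\phi}/n$ exactly. Assembling both contributions and tightening the elementary inequalities then produces the constant stated in the proposition.
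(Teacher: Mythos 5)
Your treatment of the plugin estimator is essentially the paper's own proof: the same bias--variance decomposition, the same variational characterization of total variation to bound the bias by $2\tv{p_\theta}{q_\phi}$, and a crude bound on the single-sample variance. (One small point: Popoviciu's inequality for $f$ ranging over $[-1,1]$ gives variance at most $1$, hence a $\nicefrac{1}{n}$ term rather than the stated $\nicefrac{1}{2n}$; the paper asserts $\nicefrac{1}{2n}$ without justification, so neither derivation actually produces that constant, but this does not affect how the bound is used.)

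For the SNIS inequality the situation is different, because the paper does not prove it at all --- it simply invokes Theorem~2.1 of~\cite{Agapiou2017}. Your ratio identity and the computation $\mathbb{E}_{q_\phi}[W^2] = 1 + \chidiv{p_\theta}{q_\phi}$ are the right ingredients, but the step you yourself flag as the main obstacle is where the argument genuinely breaks. Splitting on $\{B_n \ge \nicefrac{1}{2}\}$ costs a factor of $4$ from $B_n^{-2} \le 4$ on the good event, which combined with the $4\norm{f}_\infty^2$ envelope of $|f - \mathcal{Q}|^2$ already yields at least $16(1+\chidiv{p_\theta}{q_\phi})/n$ there, and Chebyshev on the complement contributes another $16\chidiv{p_\theta}{q_\phi}/n$; no amount of ``tightening the elementary inequalities'' recovers the constant $4$ from this route, so the last sentence of your sketch is an unsubstantiated leap. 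Carrying the computation through honestly would also have surfaced a second issue: any bound obtained this way carries an additive $O(\nicefrac{1}{n})$ term that survives when $p_\theta = q_\phi$, where SNIS reduces to exact Monte Carlo with mean-square error $\mathrm{Var}_{p_\theta}(f)/n > 0$. The displayed inequality therefore cannot hold with the centered chi-square divergence; it is correct only if $\Delta_{\chi^2}$ is read as the second moment $\mathbb{E}_{q_\phi}[(p_\theta/q_\phi)^2] \ge 1$, which is the quantity $\rho$ appearing in the cited theorem. If you want a self-contained proof, you need a sharper handling of the random denominator than event-splitting --- for instance the identity $\nicefrac{a}{b} = a + (\nicefrac{a}{b})(1-b)$ combined with the a priori bound $|\nicefrac{a}{b}| \le 2\norm{f}_\infty$, which avoids inverting $B_n$ altogether --- or simply cite the theorem as the paper does.
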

We derive the first bound in a later section; the second is from~\cite{Agapiou2017}. We now derive two points to argue that such bounds are uninformative for selecting the best algorithm. 

First, these bounds suggest the plugin estimator is suboptimal because, in contrast to the SNIS estimator, its bias does not vanish with infinite samples. 
However, the upper bound in~\cite{Agapiou2017} may be uninformative when the $\chi^2$ divergence is infinite (as it may be for a VAE). Consequently, it is not immediately apparent which estimator will perform better. 

A second issue we wish to underline pertains to the general fact that upper bounds may be loose. For example, with Pinsker's inequality we may further upper bound the bias of the plugin estimator by the square root of either $\kl{p_\theta}{q_\phi}$ or $\kl{q_\phi}{p_\theta}$. In this case, the VAE and the WW algorithm~\cite{le2018revisiting} both minimize an upper bound on the mean-square error of the plugin estimator; the one we should choose is again unclear.

\subsection{Proof of Proposition 1}
\propbounds*
\begin{proof}
For the plugin estimator
\begin{align}
\hat{\mathcal{Q}}^n_{\textrm{P}}(f, x) = \frac{1}{n}\sum_{i=1}^nf(z_i),
\end{align}
we can directly calculate and upper bound the mean-square error. First, for notational convenience, we will use
\begin{align}
    I^* &= \mathbb{E}_{p(z \mid x)} f(z) \\
    \bar{I} &= \mathbb{E}_{q(z \mid x)} f(z).
\end{align}
Observe that
\begin{align}
    \left|I^* - \bar{I}\right| \leq & \sup_{\norm{g}_\infty \leq 1}\left|\mathbb{E}_{q(z \mid x)} g(z) - \mathbb{E}_{p(z \mid x)} g(z) \right| \\
    &\leq 2\tv{p(z \mid x)}{q(z \mid x)},
\end{align}
by definition of the total variation distance. Now we can proceed to the calculations
\begin{align}
    \left( \frac{1}{n}\sum_{i=1}^nf(z_i) - I^*  \right)^2  & = \left( \frac{1}{n}\sum_{i=1}^nf(z_i) - \bar{I}\right)^2 + (I^* - \bar{I})^2 + 2\left( \frac{1}{n}\sum_{i=1}^nf(z_i) - \bar{I}\right)(I^* - \bar{I}),
\end{align}
and take expectations on both sides with respect to the variational distribution:
\begin{align}
\mathbb{E}\left( \frac{1}{n}\sum_{i=1}^nf(z_i) - I^*  \right)^2  & = \mathbb{E}\left( \frac{1}{n}\sum_{i=1}^nf(z_i) - \bar{I}\right)^2 + (I^* - \bar{I})^2 \\
& = \frac{1}{n}\mathbb{E}\left(f(z_1) - \bar{I}\right)^2 + (I^* - \bar{I})^2 \\
& \leq \frac{1}{2n} + 4\tvsq{p(z \mid x)}{q(z \mid x)}.
\end{align}

For the self-normalized importance sampling estimator
\begin{align}
\hat{\mathcal{Q}}^n_{\textrm{IS}}(f, x) = 
    \frac{1}{n}\sum_{i=1}^nw(x, z_i)f(z_i),
\end{align}
we instead rely on Theorem 2.1 of~\cite{Agapiou2017}.
\end{proof}

\section{Analytical derivations in the bivariate Gaussian setting}
\label{app:biv_gauss}
For a fixed $x$, we adopt the condensed notation $p_\theta(z \mid x) = p$. According to the Gaussian conditioning formula, there exists $\mu$ and $\Lambda$ such that 
\[p \sim \mathrm{Normal}\left(\mu, \Lambda^{-1}\right).\]
We consider variational approximations of the form
\[q \sim \mathrm{Normal}\left(\nu, \textrm{diag}(\lambda)^{-1}\right).\]
We wish to characterize the solution $q$ to the following optimization problems:
\begin{align}
\begin{array}{ccc}
    q_{\textrm{RKL}} = \argmin_{q}\kl{q}{p}, & q_{\textrm{FKL}} = \argmin_{q}\kl{p}{q}, & q_\chi = \argmin_{q}\chidiv{p}{q}.
\end{array}
\end{align}

We focus on the setting in which the mean of the variational distribution is correct. This is true for variational Bayes or the general Renyi divergence, as underlined in~\cite{NIPS2016_6208}. Therefore, we further assume $\nu$ can be chosen equal to $\mu$ for simplicity. 

Conveniently, in the bivariate setting we have an analytically tractable inverse formula
\begin{align}
\begin{array}{cc}
    \Lambda = \begin{bmatrix}
    \Lambda_{11}       & \Lambda_{12} \\
    \Lambda_{21}       & \Lambda_{22} 
\end{bmatrix},&
    \Lambda^{-1} = \frac{1}{|\Lambda|} \begin{bmatrix}
    \Lambda_{22}       & -\Lambda_{12} \\
    -\Lambda_{21}       & \Lambda_{11} 
\end{bmatrix}.
\end{array}
\end{align}

We also rely on the expression of the Kullback-Leibler divergence between two multivariate Gaussian distributions of $\mathbb{R}^d$:
\begin{align}
    \kl{\textrm{Normal}\left(\mu, \Sigma_1\right)}{ \textrm{Normal}\left(\mu, \Sigma_2\right)} = \frac{1}{2}\left[\log\frac{|\Sigma_2|}{|\Sigma_1|} - d + \textrm{Tr}(\Sigma_2^{-1}\Sigma_1)\right].
\end{align}

\subsection{Reverse KL}
Using the expression of the KL and the matrix inverse formula, we have that
\begin{align}
    \argmin_q \kl{q}{p} = \argmin_{\lambda_1, \lambda_2} \log \lambda_1\lambda_2 + \frac{\Lambda_{11}}{\lambda_1} + \frac{\Lambda_{22}}{\lambda_2}.
\end{align}
The solution to this optimization problem is 
\begin{align}
\left\{
\begin{array}{ll}
\lambda_1 &= \Lambda_{11} \\
\lambda_2 &= \Lambda_{22} 
\end{array} \right..
\end{align}

\subsection{Forward KL}
From similar calculations,
\begin{align}
    \argmin_q \kl{p}{q} = \argmin_{\lambda_1, \lambda_2} -\log \lambda_1\lambda_2 + \frac{1}{|\Lambda|}\left[\lambda_1\Lambda_{22} + \lambda_2\Lambda_{11}\right].
\end{align}
The solution to this optimization problem is 
\begin{align}
\left\{
\begin{array}{ll}
\lambda_1 &= \Lambda_{11} - \frac{\Lambda_{12}\Lambda_{21}}{\Lambda_{22}} \\
\lambda_2 &= \Lambda_{22} - \frac{\Lambda_{12}\Lambda_{21}}{\Lambda_{11}} 
\end{array} \right..
\end{align}

\subsection{Chi-square divergence}
A closed-form expression of the Renyi divergence for exponential families (and in particular, for multivariate Gaussian distributions) is derived in~\cite{burbea1984convexity}. We could in principle follow the same approach. However, \cite{NIPS2016_6528} derived a similar result, which is exactly the desired quantity for $\alpha = -1$ in Appendix~B of their manuscript. Therefore, we simply report this result:
\begin{align}
\left\{
\begin{array}{ll}
\lambda_1 &= \Lambda_{11}\left[ \frac{3}{2} - \frac{1}{2}\sqrt{1 + \frac{8\Lambda_{12}\Lambda_{21}}{\Lambda_{11}\Lambda_{22}}}\right] \\
\lambda_2 &= \Lambda_{22}\left[ \frac{3}{2} - \frac{1}{2}\sqrt{1 + \frac{8\Lambda_{12}\Lambda_{21}}{\Lambda_{11}\Lambda_{22}}}\right].  
\end{array} \right\}
\end{align}

\subsection{Importance-weighted variational inference} For IWVI, most quantities are not available in closed form. However, the problem is simple and low-dimensional. We use naive Monte Carlo with $10,000$ samples to estimate the IWELBO. The parameters $\lambda_1$ and $\lambda_2$ are the solution to the numerical optimization of the IWELBO (Nelder–Mead method).

\section{Supplemental information for the pPCA experiment}
\label{app:experiment1}
In this appendix, we give more details about the simulation, the construction of the dataset, the model, and the neural network architecture. We also give additional results for a larger number of particles and for benchmarking posterior collapse. 

\subsection{Simulation}

let $p, d \in \mathbb{N}^2, B = [b_1, ..., b_p], C = [c_1, ..., c_p], \nu \in \mathbb{R}^+$. 
We choose our linear system with random matrices
\begin{equation}
\begin{split}
\forall j \leq p, b_j &\sim \textrm{Normal}\left(0, \frac{I_d}{p}\right)  \\
\forall j \leq q, c_j &\sim \textrm{Normal}\left(1, 2\right),  \\
\end{split}
\end{equation}
and define the conditional covariance
\begin{equation}
    \Sigma_{x \mid z} = \nu \times  \text{diag}([c_1^2, \dots, c_p^2]).
\end{equation}

Having drawn these parameters, the generative model is as follows:
\begin{equation}
    \begin{split}
    z &\sim \textrm{Normal}\left(0, I_p\right) \\
    x \mid z &\sim \textrm{Normal}\left(Bz,\Sigma_{x \mid z} \right). 
    \end{split}
    \label{eq:ppca_generative}
\end{equation}
The marginal log-likelihood $p(x)$ is tractable:
\begin{equation}
x \sim \textrm{Normal}\left(0, \Sigma_{x \mid z} + BB^\top \right). 
\end{equation}

The posterior $p(z \mid x)$ is also tractable:
\begin{equation}
\begin{split}
\Sigma^{-1}_{z \mid x} &= I_{p} + A^\top  \Sigma_{x \mid z}^{-1} A  \\
M_{z \mid x} &= \Sigma_{z \mid x} A^\top  \Sigma_{x \mid z}^{-1} \\
z \mid x &\sim \textrm{Normal}\left(M_{z \mid x}x, \Sigma_{z \mid x}\right). 
\end{split}
\label{lin-gauss-posterior}
\end{equation}

The posterior expectation for a toy hypothesis testing $p(z_1 \geq \nu \mid x)$ (with $f: z \mapsto \mathds{1}_{\{z_1 \geq \nu\}}$) is also tractable because this distribution is Gaussian and has a tractable cumulative distribution function.

\subsection{Dataset}
We sample $1000$ datapoints from the generative model (Equation \ref{eq:ppca_generative}) with $p = 10$, $q = 6$, and $\nu=1$. 
We split the data with a ratio of $80\%$ training to $20\%$ testing. 

\subsection{Model details and neural networks architecture}

For every baseline, we partially learned the generative model of Eq.~\eqref{eq:ppca_generative}.
The matrix $B$ was fixed, but the conditional diagonal covariance $\Sigma_{x \mid z}$ weights were set as free parameters during inference.
Neural networks with one hidden layer (size $128$), using ReLu activations, parameterized the encoded variational distributions.

Each model was trained for $100$ epochs, and optimization was performed using the Adam optimizer (learning rate of 0.01, batch size 128).

\subsection{Additional results}

We compare PSIS levels for the pPCA dataset for each model (figure \ref{fig:mnist_cross_other}).
For most models (IWAE, WW, and $\chi$), the VAE variational distribution provides poor importance-weighted estimates.
The PSIS exceeds $0.7$ for those combinations, hinting that associated samples may be unreliable.
Most other combinations show acceptable PSIS levels, with the proposals from $\rchi$ and MIS performing best.

\begin{figure}[H]
    \centering
        \includegraphics[width=0.3\textwidth]{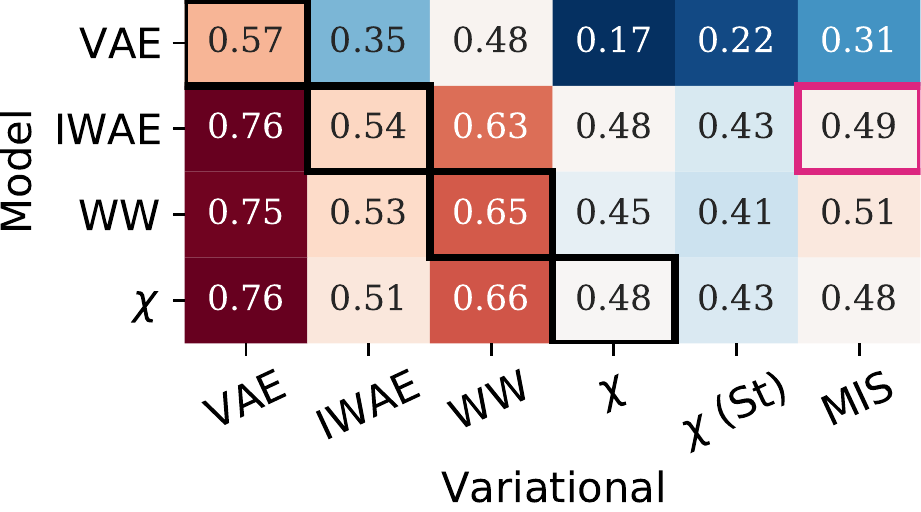}
    \caption{PSIS for pPCA. Each row corresponds to an objective function for fitting the model parameters and each column corresponds to an objective function for fitting the variational parameters.}
    \label{fig:mnist_cross_other}
    \vspace{-0.3cm}
\end{figure}

\subsection{Results with an increased number of particles}

\begin{figure}
\begin{minipage}{0.55\textwidth}
\centering
\captionsetup{type=table}
\begin{small}
\begin{sc}
\caption{Results on the pPCA simulated data. MAE refers to the mean absolute error in posterior expectation estimation.}
\label{table:ppca_k200}
\begin{tabular}{lcccc}
\toprule
& \textbf{VAE} & \textbf{IWAE} & \textbf{WW} & \textbf{$\rchi$-VAE}\\
\midrule
$\log p_\theta(X)$  & -17.22 & -16.93 & \textbf{-16.92} & -17.28\\
IWELBO  & -17.22 & -16.93 & \textbf{-16.92} & -17.29\\
\midrule
$||A||_2$  & 1.62 & \textbf{0.96} & 1.16 & 1.00\\
PSIS  & 0.56 & \textbf{0.07} & 0.49 & 0.98\\
\midrule
MAE  & 0.062 & 0.028 & \textbf{0.021} & 0.073\\
\bottomrule
\end{tabular}
\end{sc}
\end{small}

\end{minipage}
\hfill
\begin{minipage}{0.4\textwidth}
\centering
\includegraphics[width=0.8\textwidth]{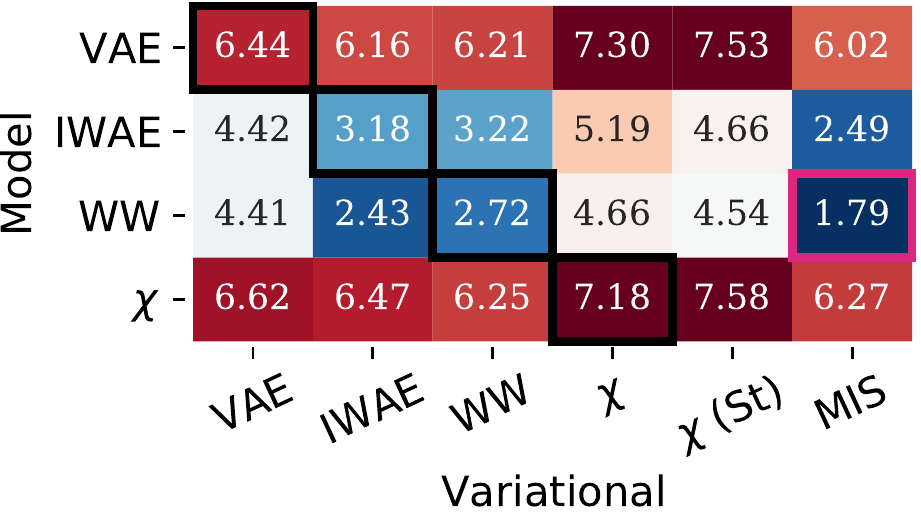}
\caption{MAE ($\times 100$) for pPCA.
Each row corresponds to an objective function for fitting the model parameters and each column corresponds to an objective function for fitting the variational parameters.}
\label{fig:ppca_k200}
\end{minipage}
\vspace{-0.3cm}
\end{figure}

We also benchmark the different algorithms for an increased number of particles (Table \ref{table:ppca_k200}).
In this setup, we can observe that the IWAE model performance worsens in terms of held-out likelihood with a high number of particles, underlining a well-known behavior of this model~\cite{rainforth2018tighter}.
Conversely, increasing the number of particles is more beneficial to WW than to IWAE. 
WW learns the best generative model (in terms of held-out likelihood) and reaches lower mean absolute errors than IWAE. 
Intriguingly, the performance of the $\rchi$-VAE drops significantly on all metrics in this setup.
As in the other experiments, our three-step approach minimizes the MAE, among all generative model/variational distribution pairings (Figure \ref{fig:ppca_k200}).

\subsection{Benchmarking for posterior collapse methods}

Posterior collapse is an established issue of VAE training in which the variational network does not depend on the data instance.
Currently, there are two different explanations for this behavior. 
In some research lines, it is assumed to be a specificity of the inference procedure\cite{fu2019cyclical, he2018lagging}.
In others, it is thought to be caused by a deficient model~\cite{NIPS2019_9138}.  
The second interpretation is beyond the scope of our manuscript.
To measure the impact of posterior collapse, we included cyclic KL annealing ~\cite{fu2019cyclical} and lagging inference network~\cite{he2018lagging} baselines to the pPCA experiment. 
We also considered constant KL annealing, which did not improve performance over cyclical annealing.

These methods improve the held-out log-likelihood and MAE of the VAE baseline (Table \ref{table:ppca_extended}), hinting that posterior collapse alleviation can improve decision-making.
However, even the best performing method (lagging inference networks) shows slight improvement over the VAE baseline ($2\%$ in terms of held-out likelihood) and does not reach the other baseline performances.
We leave extended studies of posterior collapse effects to future work.

\begin{figure}[h]
\centering
\captionsetup{type=table}
\begin{small}
\begin{sc}
\caption{Extended results on the pPCA simulated data.}
\label{table:ppca_extended}
\begin{tabular}{lcccccc}
\toprule
& \textbf{VAE} & \textbf{Agg}& \textbf{Cyclic}&\textbf{IWAE} & \textbf{WW} & \textbf{$\rchi$-VAE}\\
$\log p_\theta(X)$  &
-17.65 & -17.13 & -17.20 & \textbf{-16.91} & -16.93 & -16.92\\
IWELBO  & -17.66 & -17.14 & -17.20 & \textbf{-16.92} & -16.96 & \textbf{-16.92}\\
\midrule
$\norm{A}_2$  & 1.69 & 1.47 & 1.68 & 1.30 & 2.32 & \textbf{1.13}\\
PSIS  & 0.54 & 0.55 & 0.58 & 0.53 & 0.66 & \textbf{0.47}\\
\midrule
MAE  & 1.03 & 0.057 & 0.050 & 0.032 & 0.043 & \textbf{0.030}\\
\bottomrule
\end{tabular}
\end{sc}
\end{small}

\end{figure}

\section{Supplemental information for the MNIST experiment}
\label{app:experiment2}

\subsection{Dataset}
We used the MNIST dataset~\cite{mnist}, and split the data using a $50\%$ training to $50\%$ test ratio. 

\subsection{Model details and neural networks architecture}

For efficiency considerations, the variational distribution parameters of $q_\phi(z \mid x)$ were parameterized using a small convolutional neural network (3 layers of size-3 kernels), followed by two fully-connected layers.
The parameters of the distributions $q_\phi(c \mid z), q_\phi(u \mid z, c), p_\theta(x \mid z), p_\theta(z \mid c, u),$ and $p_\theta(c)p_\theta(u)$ were all encoded by fully-connected neural networks (one hidden layer of size 128).
We used SELU non-linearities~\cite{klambauer2017self} and a dropout (rate 0.1) between all hidden layers.

All models were trained for 100 epochs using the Adam optimizer (with a learning rate of $0.001$ and a batch size of $512$).

\subsection{Additional results}

All models show relatively similar accuracy levels (Figure \ref{fig:Xmnist_cross_other}).
The three-step procedure applied to the best generative model (IWAE) provides the best levels of accuracy.

\begin{figure}[H]
    \centering
        \includegraphics[width=0.3\textwidth]{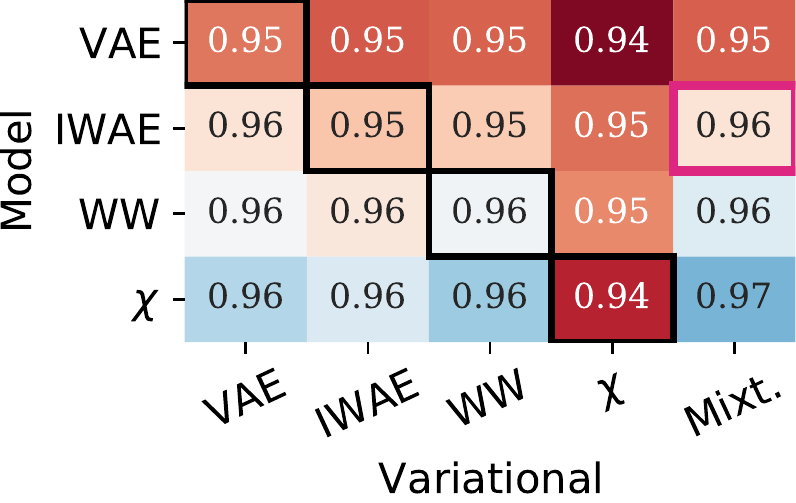}
    \caption{
    Accuracy for MNIST.
Each row corresponds to an objective function for fitting the model parameters and each column corresponds to an objective function for fitting the variational parameters.}
    \label{fig:Xmnist_cross_other}
    \vspace{-0.3cm}
\end{figure}

\subsection{Estimation of posterior expectations for the M1+M2 model}

Here we derive the two estimators for estimating $p_\theta(c \mid x)$ in the M1+M2 model. First, we remind the reader that the generative model is
\begin{align}
\label{eq:m1m2_gen_app}
    p_\theta(x, z, c, u) &= p_\theta(x \mid z)p_\theta(z \mid c, u)p_\theta(c)p_\theta(u)
\end{align}
and that the variational distribution factorizes as
\begin{align}
q_\phi(z, c, u \mid x) &= q_\phi(z \mid x)q_\phi(c \mid z)q_\phi(u \mid z, c).
\end{align}

\paragraph{Plugin approach}
For the plugin approach, we compute $q_\phi(c \mid x)$ as
\begin{align}
    q_\phi(c \mid x) &= \iint_{z, u} q_\phi(c, u, z \mid x)du\,dz \\ 
    &= \iint_{z, u} q_\phi(u \mid c, z)q_\phi(c, z \mid x)du\,dz \\
    &= \iint_{z} q_\phi(c \mid z) q_\phi(z \mid x)dz,
\end{align}
where the last integral is estimated with naive Monte Carlo. 

\paragraph{SNIS approach}
We obtain $p_\theta(c, x)$ via marginalization of the latent variables $z, u$:
\begin{align}
    p_\theta(c, x)
    &= 
    \iint
    p_\theta(x, u, c, z) dz\,du.
\end{align}
We may estimate this probability, for a fixed $c$, using $q_\phi(z, u \mid x, c)$ as a proposal for importance sampling:
\begin{align}
p_\theta(c, x)
    &=
    \mathbb{E}
    _{q_\phi(z, u \mid x, c)}
    \left[
        \frac{p_\theta(x, u, c, z)}{q_\phi(z, u \mid x, c)}
    \right].
\end{align}
Then, the estimates for $p_\theta(c, x)$ may be normalized by their sum for all labels (equal to $p_\theta(x)$) to recover $p_\theta(c \mid x)$. 

Interestingly, this estimator does not make use of the classifier $q_\phi(c \mid z)$, so we expect it to possibly have lower performance than the plugin estimator. Indeed, the $q_\phi(c \mid z)$ is fit with a classification loss based on the labeled data points. 

\section{Analysis of alternate divergences for the M1+M2 model}
\label{app:m1m2}
We have the following pathological behavior, similar to that presented in the factor analysis instance. This pathological behavior is exacerbated when the M1+M2 model is fitted with a composite loss as in Equation~9 of~\cite{KingmaRMW14}. Indeed, neural networks are known to have poorly calibrated uncertainties~\cite{pmlr-v70-guo17a}.
\begin{prop}
\label{prop:M1M2}
Consider the model defined in Eq.~\eqref{eq:m1m2_gen}. Assume that posterior inference is exact for latent variables $u$ and $z$, such that $q_\phi(z \mid x)q_\phi(u \mid c, z) = p_\theta(z, u \mid c, x)$. Further, assume that for a fixed $z \in \mathcal{Z}$, $p_\theta(c \mid z)$ has complete support. Then, as $\mathbb{E}_{q_\phi(c \mid z)}\log q_\phi(c \mid z) \rightarrow 0$, it follows that
\begin{enumerate}
    \item $\kl{q_\phi(c, z, u \mid x)}{p_\theta(c, z, u \mid x)}$ is bounded;
    \item $\kl{p_\theta(c, z, u \mid x)}{q_\phi(c, z, u \mid x)}$ diverges; and
    \item $\chidiv{p_\theta(c, z, u \mid x)}{q_\phi(c, z, u \mid x)}$ diverges.
\end{enumerate}
\end{prop}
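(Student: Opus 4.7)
The heart of the argument should be a collapse of the full density ratio to a one-dimensional expression in $c$ and $z$ alone. My plan is to write the true posterior as
\[
p_\theta(c, z, u \mid x) = p_\theta(c \mid x)\, p_\theta(z, u \mid c, x),
\]
then invoke the hypothesis $q_\phi(z \mid x)\, q_\phi(u \mid c, z) = p_\theta(z, u \mid c, x)$ to obtain $p_\theta(c, z, u \mid x) = p_\theta(c \mid x)\, q_\phi(z \mid x)\, q_\phi(u \mid c, z)$. Dividing by $q_\phi(c, z, u \mid x) = q_\phi(z \mid x)\, q_\phi(c \mid z)\, q_\phi(u \mid z, c)$ cancels the shared factors and leaves
\[
\frac{q_\phi(c,z,u \mid x)}{p_\theta(c,z,u \mid x)} = \frac{q_\phi(c \mid z)}{p_\theta(c \mid x)}.
\]
Every divergence in the statement can then be rewritten as an expectation that depends only on $(c, z)$, with all $u$-dependence integrated out.

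For claim (1), I would expand the reverse KL as $\mathbb{E}_{q_\phi(z \mid x)}\mathbb{E}_{q_\phi(c \mid z)}[\log q_\phi(c \mid z) - \log p_\theta(c \mid x)]$. The inner first term is exactly the hypothesis $\mathbb{E}_{q_\phi(c \mid z)}\log q_\phi(c \mid z) \to 0$, which is also nonpositive and bounded below by $-\log C$ uniformly in $z$. The second term is bounded because $p_\theta(c \mid x) = \int p_\theta(c \mid z) p_\theta(z \mid x)\, dz > 0$ by the assumed full support of $p_\theta(c \mid z)$, so $|\log p_\theta(c \mid x)| \leq \max_{c} |\log p_\theta(c \mid x)| < \infty$ for the finite label set.

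For claim (2), I use the symmetric expansion $\mathbb{E}_{p_\theta(z \mid x)}\mathbb{E}_{p_\theta(c \mid z)}[\log p_\theta(c \mid x) - \log q_\phi(c \mid z)]$. The first piece is again finite. The second piece equals the $z$-averaged cross-entropy $\mathbb{E}_{p_\theta(z \mid x)} H(p_\theta(c \mid z), q_\phi(c \mid z))$; this decomposes as the (bounded) entropy of $p_\theta(c \mid z)$ plus $\mathbb{E}_{p_\theta(z \mid x)} \kl{p_\theta(c \mid z)}{q_\phi(c \mid z)}$. The vanishing-entropy hypothesis forces $q_\phi(\cdot \mid z)$ to concentrate on a single class $c^\star(z)$, while $p_\theta(c \mid z)$ charges every class strictly positively; hence $\kl{p_\theta(c \mid z)}{q_\phi(c \mid z)}$ diverges pointwise in $z$, and Fatou gives divergence of the integral. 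Claim (3) follows the same pattern: using $\chidiv{p}{q} + 1 = \mathbb{E}_p[p/q]$, the ratio simplifies to $\mathbb{E}_{p_\theta(z \mid x)} \sum_c p_\theta(c \mid z)\, p_\theta(c \mid x) / q_\phi(c \mid z)$, and the terms indexed by $c \neq c^\star(z)$ blow up as $q_\phi(c \mid z) \to 0$ while $p_\theta(c \mid z)\, p_\theta(c \mid x)$ stays bounded away from zero.

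The only subtlety I anticipate is justifying the passage to the limit rigorously — specifically, that the pointwise divergences in $z$ survive integration against $p_\theta(z \mid x)$ and $q_\phi(z \mid x)$. This is standard (Fatou or monotone convergence once one exhibits a monotone sequence of $q_\phi(c \mid z)$ with entropies tending to zero), but it is the one place where the informal phrase "as $\mathbb{E}_{q_\phi(c \mid z)} \log q_\phi(c \mid z) \to 0$" must be turned into a precise statement about a family of variational distributions.
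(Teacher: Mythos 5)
Your proof is correct and follows essentially the same route as the paper's: reduce each joint divergence, via the exact-inference hypothesis on $(z,u)$, to a divergence between the categorical distribution $q_\phi(c\mid z)$ and a full-support categorical, then use the vanishing-entropy condition to force $q_\phi(\cdot\mid z)$ onto a single class so that the forward KL and $\chi^2$ terms blow up while the reverse KL stays bounded. The only cosmetic difference is that your cancellation compares against $p_\theta(c\mid x)$ where the paper compares against $p_\theta(c\mid z)$; under the stated exactness assumption these coincide almost everywhere with respect to $p_\theta(z\mid x)$, so the two reductions agree.
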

\begin{proof}
The proof mainly consists of decomposing the divergences. The posterior for unlabeled samples factorizes as
\begin{align}
    p_\theta(c, u, z \mid x) = p_\theta(c \mid z) p_\theta(z, u \mid x, c).
\end{align}
From this, expressions of the other divergences follow from the semi-exact inference hypothesis. Remarkably, all three divergences can be decomposed into similar forms. Also, the expression of the divergences can be written in closed form (recall that $c$ is discrete) and as a function of $\lambda$.

\textit{Reverse-KL.} In this case, the Kullback-Leibler divergence can be written as
\begin{align}
    \kl{q_\phi(c, z, u \mid x)}{p_\theta(c, z, u \mid x)} &= \mathbb{E}_{q_\phi(c \mid z)} \kl{q_\phi(z, u \mid x, c)}{p_\theta(z, u\mid x, c)} \\
    &+\mathbb{E}_{q_\phi(z \mid x)} \kl{q_\phi(c \mid z)}{p_\theta(c \mid z)},
\end{align}
which further simplifies to 
\begin{align*}
    \kl{q_\phi(c, z, u \mid x)}{p_\theta(c, z, u \mid x)} &= \mathbb{E}_{p_\theta(z \mid x)} \kl{q_\phi(c \mid z)}{p_\theta(c \mid z)}.
\end{align*}
This last equation can be rewritten as a constant plus the differential entropy of $q_\phi(c \mid z)$, which is bounded by $\log C$ in absolute value.

\textit{Forward-KL.} Similarly, we have that
\begin{align*}
    \kl{p_\theta(c, z, u \mid x)}{q_\phi(c, z, u \mid x)} &= \mathbb{E}_{p_\theta(c \mid z)} \kl{p_\theta(z, u\mid x, c)}{q_\phi(z, u \mid x, c)} \\
    &+\mathbb{E}_{p_\theta(z \mid x)} \kl{p_\theta(c \mid z)}{q_\phi(c \mid z)},
\end{align*}
which also further simplifies to
\begin{align*}
    \kl{p_\theta(c, z, u \mid x)}{q_\phi(c, z, u \mid x)} &= \mathbb{E}_{p_\theta(z \mid x)} \kl{p_\theta(c \mid z)}{q_\phi(c \mid z)} \\
    &= \mathbb{E}_{p_\theta(z \mid x)} \sum_{c=1}^C p_\theta(c \mid z) \log \frac{p_\theta(c \mid z)}{q_\phi(c \mid z)}.
\end{align*}
This last equation includes terms in $p_\theta(c \mid z)\log q_\phi(c \mid z)$, which are unbounded whenever $q_\phi(c \mid z)$ is zero but $p_\theta(c \mid z)$ is not.

\textit{Chi-square.} Finally, for this divergence, we have the decomposition
\begin{align*}
    \chidiv{p_\theta(c, z, u \mid x)}{q_\phi(c, z, u \mid x)} &= \mathbb{E}_{q_\phi(z, c, u \mid x)} \frac{p^2_\theta(z, u\mid x, c)p^2_\theta(c \mid z)}{q^2_\phi(z, u \mid x, c)q^2_\phi(c \mid z)},
\end{align*}
which in this case simplifies to
\begin{align*}
    \chidiv{p_\theta(c, z, u \mid x)}{q_\phi(c, z, u \mid x)} = \mathbb{E}_{p_\theta(z \mid x)} \chidiv{p_\theta(c \mid z)}{q_\phi(c \mid z)}.
\end{align*}
Similarly, the last equation includes terms in $\nicefrac{p^2_\theta(c \mid z)}{q_\phi(c \mid z)}$, which are unbounded whenever $q_\phi(c \mid z)$ is zero but $p_\theta(c \mid z)$ is not.
\end{proof}

\section{Supplemental information for the single-cell experiment}
\label{app:experiment3}

\subsection{Dataset}

Let $N$ and $G$ denote the number of cells and genes of the dataset, respectively.
We simulated scRNA counts from two cell-states $a$ and $b$, each following a Poisson-lognormal distribution with respective means $\mu_{ag}$ and $\mu_{bg}$ for $g \leq G$, and sharing covariance $\Sigma$.
For each cell $n$, the cell-state $c_n$ is modelled as a categorical distribution of parameter $p$.
The underlying means follow log-normal distributions
\begin{align*}
    h_{ng} \sim \text{LogNormal}(\mu_{c_n}, \Sigma).
\end{align*}
Counts $x_{ng}$ for cell $n$ and gene $g$ are assume to have Poisson noise
\begin{align*}
    x_{ng} \sim \text{Poisson}(h_{ng}).
\end{align*}

We now clarify how the log-normal parameters were constructed.
Both populations shared the same covariance structure
\begin{align*}
    \Sigma = 
    (0.5 + u) I_g
    + 
    2 a a^T,
    ~~\text{where}~~
    \begin{cases}
    a \sim \mathcal{U}
    ((-1, 1)^g)
     \\ 
    u \sim \mathcal{U}
    ((-0.25, 0.25)^g).
    \end{cases}
\end{align*}

The ground-truth LFC  values $\Delta_g$ between the two cell states, $a$ and $b$, were randomly sampled in the following fashion.
We first randomly assign a differential expression status to each gene.
It can correspond to similar expression, up-regulation, or down-regulation between the two states for the gene.
Conditioned to this status, the LFCs were drawn from Gaussian distributions respectively centered on $0, -1$, and $1$ and of standard deviation $\sigma = 0.16$.

Finally, gene expression means for population $a$ were sampled uniformly on $(10, 100)$
populations $b$ obtained as
\begin{align*}
    \mu_{b} =2^{\Delta_g} \mu_a.
\end{align*}

In our experiments, we used $N=1000$ and $G=100$, and followed a $80\%-20\%$ train-test split ratio.

\subsection{Model details and neural networks architecture}

Here we introduce a variant of scVI as a generative model of cellular expression counts. For more information about scVI, please refer to the original publication~\cite{Lopez292037}.

\paragraph{Brief background on scVI} Latent variable $z_n \sim \text{Normal}(0, I_d)$
represents the biological state of cell $n$. Latent variable $l_n \sim \text{LogNormal}(\mu_l, \sigma_l^2)$ represents the library size (a technical factor accounting for sampling noise in scRNA-seq experiments). Let $f_w$ be a neural network. For each gene $g$, expression count $x_{ng}$ follows a zero-inflated negative binomial distribution whose negative binomial mean is the product of the library size $l_n$ and normalized mean $h_{ng} = f_w(z_n)$. The normalized mean $h_{ng}$ is therefore deterministic conditional on $z_n$; it will have uncertainty in the posterior due to $z_n$. The measure $p_\theta(h_{ng} \mid x_n)$ denotes the push-forward of $p_\theta(z_n \mid x_n)$ through the $g$-th output neuron of neural net $f_w$. scVI therefore models the distribution $p_\theta(x)$.

\paragraph{Differences introduced} In our experiments, the importance sampling weights for all inference mechanisms had high values of the PSIS diagnostic for the original scVI model.
Although the FDR control was more efficient with alternative divergences, our proposal distributions were poor. The posterior variance for latent variable $l_n$ could reach high values, leading to numerical instabilities for the importance sampling weights (at least on this dataset). To work around the problem, we removed the prior on $l_n$ and learned a generative model for the conditional distribution $p_\theta(x_n \mid l_n)$ using the number of transcripts in cell $n$ as a point estimate of $l_n$.

\subsection{Additional results}

\begin{figure}[H]
    \centering
    \includegraphics[width=0.65\textwidth]{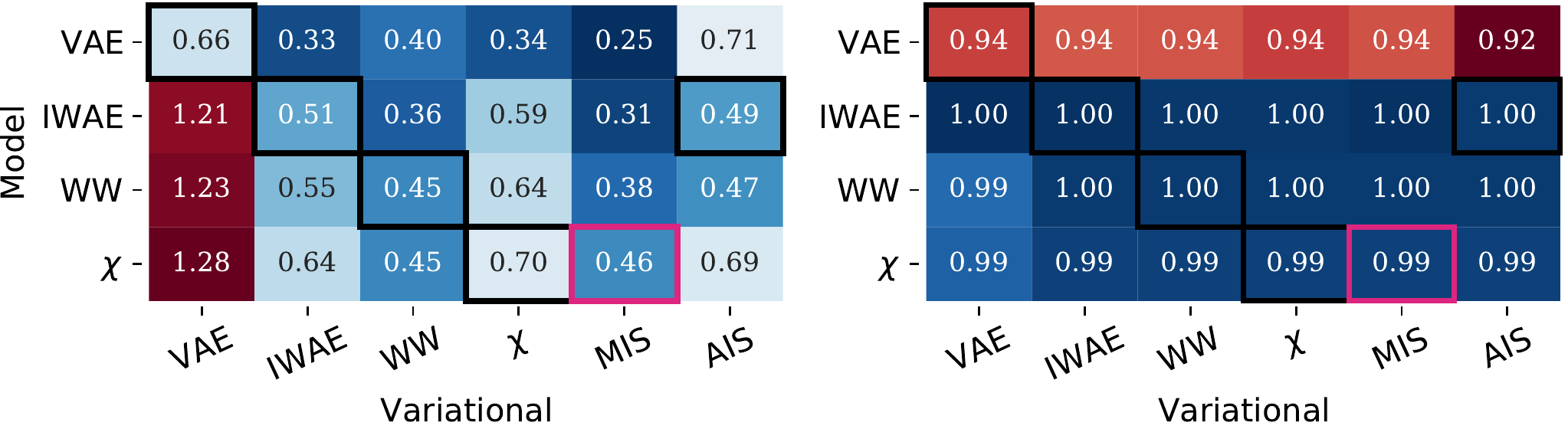}
    \caption{
    PSIS (\textit{left}) and PRAUC (\textit{right}) for scVI. Each row corresponds to an objective function for fitting the model parameters and each column corresponds to an objective function for fitting the variational parameters.}
    \label{fig:XscVI-cross-other}
    \vspace{-0.3cm}
\end{figure}

We emphasize that the PSIS metric does not provide a complete picture for selecting a decent model/variational distribution combination.
On the differential expression task, most combinations using VAEs as generative models offer appealing PSIS values (Figure \ref{fig:XscVI-cross-other}). However,  these combinations offer deceiving gene rankings, as hinted by their PRAUC ($AUC=0.94$).  
In addition, the variational distributions trained using the classical ELBO used in combination with IWAE, WW, or $\chi$ are inadequate for decision-making.
These blends reach inadmissible levels of PSIS.

To assess the potential of the different models for detecting differential expression, we compare the FDR evolution with the posterior expected FDR of the gene rankings obtained by each model (Figure \ref{fig:YscVI-cross-other}).
The match between these quantities for IWAE and $\chi$ hints that they constitute sturdy approaches for differential expression tasks, while the traditional VAE fails to estimate FDR reliably.

\begin{figure}[H]
    \centering
    \includegraphics[width=0.7\textwidth]{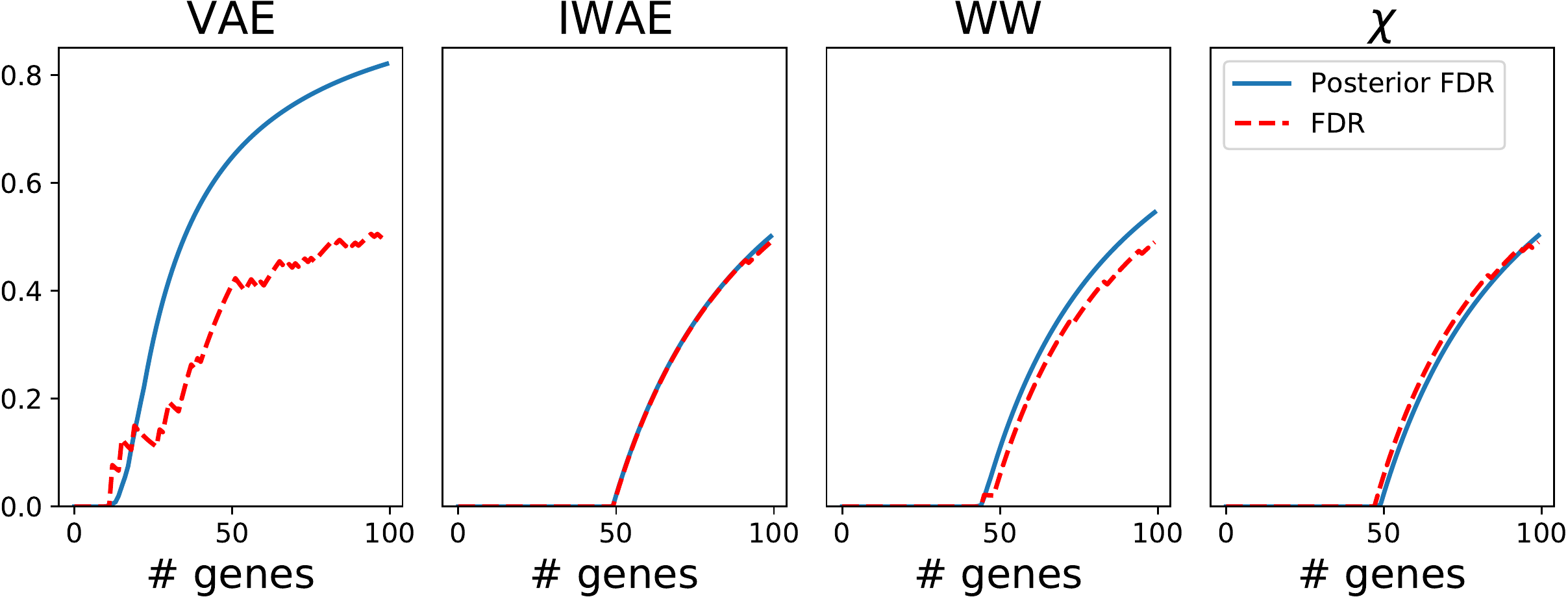}
    \caption{
    Posterior expected FDR (blue) and ground-truth FDR (red) for the decision rule that selects the genes with the highest DE probability.
    }
    \label{fig:YscVI-cross-other}
    \vspace{-0.3cm}
\end{figure}

\end{document}